\newtheorem{definition}{Definition}
\newtheorem{theorem}{Theorem}
\newtheorem{example}{Example}
\newtheorem{corollary}{Corollary}
\newtheorem{proof}{Proof}
\newtheorem{remark}{Remark}
\begin{document}


\title{Machine Learning via rough mereology}
\author{Lech T. Polkowski\\ University of Warmia nad Mazury in Olsztyn\\ S\l oneczna str.54, 10-710 Olsztyn, Poland\\e-mail: polkow@pjwstk.edu.pl}
\date{}

\maketitle


\begin{abstract}\  Rough sets (RS) proved a thriving realm with successes in many fields of ML and AI. In this note, we expand RS to RM - rough mereology which provides a measurable degree of uncertainty to areas listed in Highlights. We propose an insight into this topic.
\end{abstract}

{\bf Keywords}: rough set theory, rough mereology,  granular structures, synthesis of complex entities, spatial reasoning,  mobile robotics

\section{Rough Sets}

 Rough sets see \cite{Paw}, \cite{RSMF} were investigated from many angles which we outline shortly.

 Among algebraic structures investigated were  Nelson and Heyting algebras, Stone and dual Stone algebras,  Brouwer-Zadeh lattices, \L ukasiewicz and Wajsberg algebras see \cite{RSMF}.\  Consult \cite{RSMF} for topological theory of rough set spaces. For logics  in rough set structures see \cite{RSMF}.

The problem of uncertain knowledge is critical for rough sets and it  permeates  most research  but here we are able to mention only a few. The reader may consult \cite{Paw}, \cite{Polk4}, \cite{NguSk}  \cite{SNY},  \cite{Slez}.\

Rough set research is occupied as well with knowledge reduction, see \cite{NguSk}, \cite{SkoR}. \

RS submits many examples of contexts where the part - whole language comes in a natural way. Henceforth, the 2nd sect. is on this theory - Mereology.

\section{Mereology} Mereology  \cite{Les1}, \cite{Polk3} treats entities as wholes composed of parts. The first step is Ontology in \cite{Les2} where the basic notion of an individual entity is defined.\

The {\em Le\'{s}niewski Axiom of Ontology} consists of three parts joined by the copula {\bf is}, {\bf is} may be replaced with $\in$:

$x\ {\bf is}\ Y\equiv (\exists y. y\ {\bf is}\ x) \wedge (\forall w, z. (w\ {\bf is}\ x)\wedge (z\ {\bf is}\ x) \supset (w\ {\bf is}\ z)) \wedge (\forall y. (y\ {\bf is}\ x) \supset$  $ (y\ {\bf is}\ Y)).$

In the sequel, we discern between the symbolic theory and its denotation in models.  Mereology is the theory of the predicate {\em part of ...} denoted by the symbol $part$. The predicate $part$ is interpreted in domains consisting of individual entities as a relation of a part $Prt$. The atomic formula $part(a,b)$ reads {\em   a is a part of b}.\ The predicate $part$ has to fulfill the following axiom schemes. The symbol $\bot$ denotes falsity.

\begin{definition} [Axiom schemes for part predicate]\label{part} These axiom schemes are the following.\end{definition}

(A1)\ $part(x,y)\wedge part(y,x)\supset \bot$: the relation $Prt$ is anti-symmetric.

(A2)\ $part(x,y)\wedge part(y,z)\supset part(x,z)$: the relation  $Prt$ is transitive.

A corollary follows:

(A3)\ $part(x,x) \equiv \bot$: no $x$ is a part of itself. This follows from (A1): the relation $Prt$  is anti-reflexive.\

The predicate $part$ is interpreted by (A1) as the  relation $Prt$ of a proper part. To provide for the relation of being a whole of an entity, we extend $part$ to the predicate $subst$ called the subset.  In domains of individuals, we name the  relation denoting $subst$ as the complement and we write it down as  $Cmp$.
\begin{definition}[The subset predicate]\label{subset}  $subst(x,y)\equiv part(x,y)\vee (x=y)$.\end{definition}

\begin{theorem}\label{subset 1}\ The following are properties of the predicate $subst$.\end{theorem}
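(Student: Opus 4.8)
The plan is to show that $subst$ behaves as a (weak) partial order, i.e.\ that it is reflexive, transitive, and weakly anti-symmetric, deriving each property purely from Definition~\ref{subset} together with the axiom schemes (A1)--(A3) and the logical law allowing substitution of equals for equals. These are the natural ``properties of $subst$'' that the reflexive closure of a strict order must enjoy, and I expect the theorem to list exactly these three.

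First I would dispatch reflexivity. By Definition~\ref{subset}, $subst(x,x)\equiv part(x,x)\vee(x=x)$; since $x=x$ holds identically, $subst(x,x)$ holds for every individual $x$. Note that by (A3) the first disjunct is vacuous, so $subst(x,x)$ is witnessed solely by the identity — consistent with the intended reading of $subst$ as ``part or equal''. This step is one line.

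Next, transitivity. Assuming $subst(x,y)$ and $subst(y,z)$, I would unfold both hypotheses via Definition~\ref{subset} into $(part(x,y)\vee x=y)$ and $(part(y,z)\vee y=z)$ and argue by the four-fold case distinction. If both disjuncts are $part$-atoms, (A2) delivers $part(x,z)$ directly; if exactly one disjunct is an equality, substituting equals into the surviving $part$-atom again yields $part(x,z)$; if both are equalities, transitivity of $=$ gives $x=z$. In every branch the disjunction $part(x,z)\vee x=z$, that is $subst(x,z)$, is obtained. For weak anti-symmetry I would argue by contradiction: from $subst(x,y)$ and $subst(y,x)$, assume $x\neq y$; then in each hypothesis the equality disjunct is excluded, forcing $part(x,y)$ and $part(y,x)$ simultaneously, which by (A1) yields $\bot$. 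Hence $x=y$.

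There is no genuine obstacle here; the only point demanding care is the bookkeeping of the transitivity case split together with the explicit, though routine, appeal to the indiscernibility of identicals (Leibniz's law) when replacing $y$ by $z$ or $x$ by $y$ inside a $part$-atom. Everything else reduces to a direct application of a single stated axiom.
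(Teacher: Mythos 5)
Your proposal is correct: the theorem indeed lists exactly reflexivity (B1), weak anti-symmetry (B2), and transitivity (B3), and your case analysis from Definition~\ref{subset} together with (A1), (A2) and Leibniz's law is the standard argument (the paper itself states these properties without proof). Nothing is missing.
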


(B1)\ $subst(x,x)$.

(B2)\ $subst(x,y)\wedge subst(y,x)\supset (x=y)$.

(B3)\ $subst(x,y)\wedge subst(y,z)\supset subst(x,z)$.\

We introduce the predicate of overlap $Ov$ which interprets in its domain as the relation  $Ovlp$.

\begin{definition}[The predicate of Overlap]\label{over4}\ $Ov(x,y)\equiv \exists z.(subst(z,x)\wedge$

\noindent $subst(z,y)).$\end{definition}

The relation of overlap $Ovlp(x,y)$ holds if and only if: $Ovlp(x,y)\equiv$

\noindent $\exists z. Cmp(z,x)\wedge Cmp(z,y)$.\

The axiom scheme (A4)  provides bond between predicates $subst$ and $Ov$.

\begin{definition} [Axiom scheme (A4)]\label{a4}

$(A4)\ subst(x,y)\equiv \forall z.(subst(z,x)\supset\exists w.subst(w,y)\wedge O(z,w))$.
\end{definition}

This scheme allows for verification whether $subst(x,y)$ holds on the basis of fulfillment of the right hand side.\

The crucial notion of a class as an idividual which represents a collection of other individuals is defined by means of the following postulates \cite{Les1}.

\begin{definition}[Class]\label{class}  For a non-empty family ${\mathcal{F}}$ of entities,  an entity  denoted $Cls({\mathcal{F}})$ is postulated as satisfying the following. \end{definition}

(C1)\ If $x \in {\mathcal{F}}$, then $subst(x, Cls({\mathcal{F}})$.

(C2)\ $\forall x. [subst(x, Cls(\mathcal{F})\supset \forall y. (subst(y,x)\supset\exists w\in {\mathcal{F}}. Ov(y,w))]$.

The  following axiom scheme secures existence of classes \cite{Les1}.

(A5)\ Cls(${\mathcal{F}}$) exists for each non-empty collection of entities in the given domain. By (A4) and (A5), the class is unique for any given non-empty collection of entities.

Mereology allows for quasi-topological predicates of exteriority $ext$ (as the relation: $Ext$), universe $U$ (as the relation: $V$), topological complement $TCmp$ (as the relation: $TCMP$), and, relative topological complement $TRCmp$(as the relation: $TRCmp$). Here are definitions.
\begin{definition} [Topological predicates]\label{topomereo}\ They are  the following.\end{definition}

(E)\ $ext(x,y)\equiv \neg Ov(x,y)$: $x$ and $y$ are exterior to each other.

(V) there exists the universal class $V=Cls(\{x: x = x\}).$

(C)\ for each $x$, there exists the topological complement $-x=Cls(\{y: ext(y,x)\})$.

(R)\ for entities $x,y$ with $part(x,y)$, there exists relative topological  complement $z=Cls(\{t: part(t,y)\wedge ext(t,x)\})$.\

We are ready for algebraic structure on entities.\  We define, after  \cite{Tar1}, a Boolean algebra $\mathcal{B}$.
\begin{definition} [The Tarski algebra]\label{tarski}\ This algebra is defined as follows.\end{definition}

\begin{enumerate}
\item\ $x+y=Cls(\{t:subst(t,x)\vee subst(t,y)\})$.
\item\ $x \cdot y = Cls(\{t:subst(t,x)\wedge subst(t,y)\})$.
\item\ $-x$ is already defined in (C) above.
\item\ {\bf 1} = $V$.
\item\ Completeness is secured by the class operator.
\end{enumerate}
By (B1), $\mathcal{B}$ has no zero element. Containment expressed as part predicate or $Cmp$ relation is deterministic. Uncertain reasoning needs the notion of a containment to a degree and Rough Mereology supplies it.

\section{Rough Mereology and rough subsets} Partial containment arises naturally in problems related to knowledge systems. One may mention here, the fuzzy  approach  \cite{Za} with fuzzy membership functions,  \cite{PS1} with rough membership functions.  For more information please consult \cite{Polk3}. We add  to our tools an implication.
\begin{definition} [The mereological implication]\label{impl} For entities $x,y$, we let  $$x\rightarrow y \equiv -x+y$$\end{definition}
The validity condition for $x\rightarrow y$ is  $-x+y=V$.\

We denote by the symbol $\Rightarrow$ the relation which interprets $\rightarrow$ in domains of entities. The implication $\rightarrow$ is related to operations in the  algebra $\mathcal{B}$.
\begin{theorem} [Mereological implication]\label{mimp}\ The following are equivalent.\end{theorem}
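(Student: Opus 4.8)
The plan is to reduce the asserted equivalences to one elementary fact about the Tarski Boolean algebra $\mathcal{B}$ of Definition \ref{tarski}, namely that $-x+y=V$ holds exactly when $x$ sits below $y$ in the order of $\mathcal{B}$, and then to identify that order with the mereological predicate $subst$. I expect the equivalent conditions to read: (i) $x\rightarrow y$ is valid, i.e.\ $-x+y=V$; (ii) the relation $x\Rightarrow y$ holds in the domain of entities; (iii) $subst(x,y)$; (iv) $x\cdot y=x$; (v) $x+y=y$. The proof is a cycle of implications, each step obtained by unfolding the class-operator definitions of $+$, $\cdot$, $-$ and invoking the axiom schemes (A1)--(A5) together with Theorem \ref{subset 1}.

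First I would prove (iii) $\Leftrightarrow$ (iv) $\Leftrightarrow$ (v). If $subst(x,y)$, then by transitivity (B3) the family $\{t:subst(t,x)\wedge subst(t,y)\}$ coincides with $\{t:subst(t,x)\}$, hence $x\cdot y=Cls(\{t:subst(t,x)\})$; and $Cls(\{t:subst(t,x)\})=x$ is immediate from (C1), (C2) and the uniqueness guaranteed by (A4), (A5), so $x\cdot y=x$. The reverse implication, and the analogous chain for $x+y=y$, are the dual computations with the class definition of $+$. The one point requiring care is that $\mathcal{B}$ has no zero element, as noted after Definition \ref{tarski}, so every class identity must be tested through $subst$ and $Ov$ --- for which (A4) is the instrument designed in this development --- rather than through a bottom element.

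Next I would close the cycle by linking (iv) to (i). Since $x\cdot y=x$ is the same as $subst(x,y)$, monotonicity of the class operation $+$ gives $subst(-x+x,\,-x+y)$; because every entity overlaps $x$ or overlaps $-x$ by Definition \ref{topomereo}(C), the class $-x+x$ equals $V$, and then $subst(V,-x+y)$ together with (B1)--(B2) forces $-x+y=V$, which is (i). Conversely, from $-x+y=V$ one multiplies by $x$, applies the distributive law in $\mathcal{B}$, and uses that $x\cdot(-x)$ is absorbed (an overlap computation) to recover $x\cdot y=x$. Finally (i) $\Leftrightarrow$ (ii) is immediate: $\Rightarrow$ is by stipulation the interpretation of $\rightarrow$ in domains of entities, and validity of $x\rightarrow y$ was defined to mean $-x+y=V$.

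The step I expect to be the genuine obstacle is the verification of the two Boolean identities $-x+x=V$ and $x\cdot(-x+y)=x\cdot y$ directly from (C1), (C2) and (A4): the familiar textbook derivations route through the bottom element, which $\mathcal{B}$ lacks, so these must be re-established by an overlap argument on classes. Once they are in hand, the remaining implications in the cycle are routine symbol manipulation.
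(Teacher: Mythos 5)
The paper states Theorem~\ref{mimp} without any proof, so there is nothing to match your argument against; judged on its own terms, your cycle of implications is correct and is the standard Tarski-style derivation. The equivalence $subst(x,y)\Leftrightarrow x\cdot y=x$ goes exactly as you say: (B3) collapses $\{t:subst(t,x)\wedge subst(t,y)\}$ to $\{t:subst(t,x)\}$, and $Cls(\{t:subst(t,x)\})=x$ follows from (C1), (B1), (C2) with (A4) and the uniqueness in (A5); the converse is the application of (C2) with $y'=z$ followed by (A4). You are also right that the only genuinely delicate point is the direction $-x+y=V\Rightarrow x\cdot y=x$, because $\mathcal{B}$ has no zero and so $x\cdot(-x)$ does not exist and the usual distributivity computation is unavailable. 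Since you only defer that step, let me record that it does close by the overlap argument you anticipate: if $t$ is a common $subst$-part of $x$ and $-x+y$, then by (C2) every $subst$-part $u$ of $t$ overlaps some $s$ with $subst(s,-x)$ or $subst(s,y)$; the first alternative is impossible, since a common part of $u$ and $s$ would be a part of $x$ overlapping $-x=Cls(\{w:ext(w,x)\})$, and another application of (C2) would produce a $w$ exterior to $x$ yet overlapping $x$ --- a contradiction; hence every $subst$-part of $t$ overlaps a $subst$-part of $y$, and (A4) yields $subst(t,y)$. Thus the defining families of $x\cdot(-x+y)$ and $x\cdot y$ coincide, and $-x+y=V$ gives $x=x\cdot V=x\cdot(-x+y)=x\cdot y$. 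With that lemma in hand, and with $-x+x=V$ established as you describe (every $z$ either overlaps $x$ or satisfies $ext(z,x)$ and hence $subst(z,-x)$ by (C1)), your cycle is complete; the extra conditions you add ($x+y=y$ and the interpretation via $\Rightarrow$) are harmless strengthenings of the three clauses the paper actually lists.
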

\begin{enumerate}
\leftskip=2.3mm \labelsep=3.2mm
\item\  $subst(x,y)$.
\item\  $x\cdot y=x$
\item\  $x\rightarrow y$ is valid.
\end{enumerate}
We define the notion of weight.

\begin{definition}[Weights]\label{mass} Given a mereological structure  $(\Omega, part)$ over the relational vocabulary $\{part\}$, we define a weight $w$ by means of  conditions

(W1)\  $\forall x\in \Omega. w(x)\in (0,1]$.

We introduce  the empty entity $\emptyset \notin \Omega$ with weight $0$.

(W2)\  $w(\emptyset)=0$.

(W3)\  $(x\rightarrow y)\supset [w(y)=w(x)+w((-x)\cdot y)].$\end{definition}
\begin{theorem}\label{stat}
The following properties result from axiom schemes (W1)-(W3).
\end{theorem}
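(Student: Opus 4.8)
The plan is to derive every item on the list from the single quantitative identity supplied by (W3), read through the equivalence of Theorem~\ref{mimp}: $subst(x,y)$ holds exactly when the mereological implication $x\rightarrow y$ is valid. Hence for every pair with $subst(x,y)$ we may write $w(y)=w(x)+w((-x)\cdot y)$, and all the stated properties follow by instantiating $x,y$ suitably and rewriting the right-hand side with the Boolean identities available in the Tarski algebra $\mathcal{B}$ of Definition~\ref{tarski} --- distributivity, $x+(-x)=V$, $x\cdot(-x)=\emptyset$, $\emptyset+z=z$ --- together with $w(\emptyset)=0$ from (W2) whenever a meet collapses to the empty entity.

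First I would establish monotonicity. Since $w$ takes values in $(0,1]$ on $\Omega$ and the value $0$ only at $\emptyset$, the term $w((-x)\cdot y)$ is always $\ge 0$, so $subst(x,y)$ at once gives $w(x)\le w(y)$. I would then sharpen this: if moreover $w(x)=w(y)$, then $w((-x)\cdot y)=0$, so by (W1) the entity $(-x)\cdot y$ cannot belong to $\Omega$ and must be $\emptyset$, i.e.\ $ext(-x,y)$; combined with $subst(x,y)$ this forces $x=y$ in $\mathcal{B}$. Thus $w$ is strictly monotone with respect to the subset ordering, and in particular, since $subst(x,V)$ holds for every $x$, the value $w(V)$ is the largest one attained.

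Next, additivity on exterior pairs: for $ext(x,y)$ apply (W3) to the valid implication $x\rightarrow(x+y)$, obtaining $w(x+y)=w(x)+w((-x)\cdot(x+y))$; distributivity gives $(-x)\cdot(x+y)=(-x)\cdot y$, and $ext(x,y)$ means $subst(y,-x)$, so by Theorem~\ref{mimp} $(-x)\cdot y=y$, whence $w(x+y)=w(x)+w(y)$. The same computation without the exteriority hypothesis yields the general decomposition $w(x+y)=w(x)+w((-x)\cdot y)$, and applying additivity to the exterior decomposition $y=(x\cdot y)+((-x)\cdot y)$ gives $w(y)=w(x\cdot y)+w((-x)\cdot y)$; eliminating $w((-x)\cdot y)$ between these two equalities produces the modular (inclusion--exclusion) law $w(x+y)+w(x\cdot y)=w(x)+w(y)$. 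Specialising $y$ to $-x$, and using $x+(-x)=V$, $x\cdot(-x)=\emptyset$, $w(\emptyset)=0$, this reduces to $w(x)+w(-x)=w(V)$.

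The one genuinely delicate point --- the step I would take most care over --- is the bookkeeping around $\emptyset$. It is deliberately kept outside $\Omega$, yet the meets $x\cdot y$, $(-x)\cdot y$, $(x\cdot y)\cdot((-x)\cdot y)$ produced by the class operator can be empty, and each such occurrence must be detected and evaluated by (W2), not (W1). Concretely one must justify that ``$a\cdot b=\emptyset$'' is precisely the translation of $ext(a,b)$ (so that the exterior decompositions used above really are exterior), and that the Boolean manipulations are legitimate in $\mathcal{B}$ even though $\mathcal{B}$ has no zero in $\Omega$, the role of zero being played by $\emptyset$ solely as a value of $w$. Once this correspondence is pinned down, each listed property is a one- or two-line consequence of (W3).
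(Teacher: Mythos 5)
The paper states Theorem~\ref{stat} without any proof at all, so there is nothing of the author's to compare your argument against; judged on its own terms, your derivation is sound in structure and handles the genuinely delicate points correctly: the role of $\emptyset$ as the sole carrier of the value $0$ outside $\Omega$, the identification of $a\cdot b=\emptyset$ with $ext(a,b)$, and the strict positivity of $w$ on $\Omega$, which is exactly what the converse direction of item 5 and the rigidity argument for item 8 require. Items 1 and 2 are restatements of Theorem~\ref{mimp} and need no weight axioms; your treatment of items 3, 4, 5 and 7 via (W3) applied to $x\rightarrow(x+y)$ and to the decomposition $y=(x\cdot y)+((-x)\cdot y)$ is complete.

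There is one point where the proof does not reach the statement as printed, and you half-notice it without resolving it: your computation for item 6 correctly yields $w(x)+w(-x)=w(V)$, not $w(x)+w(-x)=1$. Nothing in (W1)--(W3) forces $w(V)=1$: (W1) only gives $w(V)\le 1$, and any admissible weight rescaled by a constant factor in $(0,1)$ still satisfies all three schemes, so the normalization $w(V)=1$ is an independent assumption (very plausibly the missing (W4) that Theorem~\ref{rset} invokes). Items 6, 8 and 9 as literally stated all depend on it, so you should either flag the normalization explicitly or note that only the $w(V)$-relative versions are derivable. Relatedly, items 8 and 9 are never actually worked out in your text; they do follow quickly from your machinery --- $x\rightarrow y=-(x\cdot(-y))$ together with item 6 gives item 9, and strict monotonicity against $V$ gives the converse half of item 8 --- but since both rest on the same normalization, they belong with that caveat rather than under the blanket claim that every remaining item is a one-line consequence of (W3).
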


\begin{enumerate}
\leftskip=2.3mm \labelsep=2mm \itemsep=1.2pt
\item\  $subst(x,y)\equiv x\cdot y=x$.\

\item\  $subst(x,y)\equiv x\rightarrow y$.\

\item\  $(susbt(x, y)\supset (w(x)\leq w(y))$.\

\item\  $w(x+y)=w(x)+w((-x)\cdot y)$.\

\item\  $(x\cdot y=\emptyset)\equiv w(x+y)=w(x)+w(y)$.\

\item\  $w(x)+w(-x)=1$.\

\item\  $w(y)=w(x\cdot y)+w((-x)\cdot y)$.\

\item\  $subst(x,y)\equiv w(x\rightarrow y)=1$.\

\item\  $w(x\rightarrow y)= 1-w(x-y)$.\vspace{0.5mm}
\end{enumerate}
We now introduce the notion of a rough subset. Rough mereology is the theory of rough subsets, see  \cite{Polk3}. The relation of rough subset denoted $rsubst_w$ for the given weight $w$  is defined as follows.
\begin{definition} [Rough subset]\label{ri} \end{definition}

(rs1)\ $rsubst_w(x,y, r)\equiv \frac{w(x\cdot y)}{w(x)}\geq r$.

 The function {\em maximal rough subset} $rs_w^*$ is defined as follows.

(rs2)\  $rs_w^*(x,y) = argmax_r rsubst_w(x,y,r)=\frac{w(x\cdot y)}{w(x)}$.

\begin{theorem} [Properties of rough subsets]\label{rset}
Axiom schemes  (W1)-(W4) and Defs. (rs1)-(rs2) imply the following.\end{theorem}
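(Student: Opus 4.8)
\emph{Proof idea.} The plan is to route every item on the list through the single number $rs_w^*(x,y)=\frac{w(x\cdot y)}{w(x)}$: by (rs1)--(rs2) the relation $rsubst_w(x,y,r)$ holds exactly when $rs_w^*(x,y)\ge r$, so monotonicity in the third coordinate (if $rsubst_w(x,y,r)$ and $s\le r$ then $rsubst_w(x,y,s)$) is immediate, and each remaining clause reduces to an identity or an inequality among weights that is already available in Theorem \ref{stat}. Two standing facts are used throughout. First, $x\cdot y=Cls(\{t:subst(t,x)\wedge subst(t,y)\})$ satisfies $subst(x\cdot y,x)$ and $subst(x\cdot y,y)$ by (C1). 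Second, by (W1)--(W2) any entity of weight $0$ must be $\emptyset$; this is the only genuinely mereological input beyond arithmetic.

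I would first settle the range of $rs_w^*$ and the degree-$1$ characterisation. From $subst(x\cdot y,x)$ and item 3 of Theorem \ref{stat} we get $w(x\cdot y)\le w(x)$, hence $rs_w^*(x,y)\in[0,1]$, with value $1$ exactly when $w(x\cdot y)=w(x)$. To promote this equality of weights to an equality of entities, apply item 7 of Theorem \ref{stat} with $x\cdot y$ in the role of $x$ and $x$ in the role of $y$, and simplify $(x\cdot y)\cdot x=x\cdot y$ by item 1; this gives $w(x)=w(x\cdot y)+w((-(x\cdot y))\cdot x)$, so $w((-(x\cdot y))\cdot x)=0$, so $(-(x\cdot y))\cdot x=\emptyset$, so $x$ has no part exterior to $x\cdot y$, hence $subst(x,x\cdot y)$ and, with $subst(x\cdot y,x)$, $x\cdot y=x$ by (B2); by item 1 this is $subst(x,y)$, and the converse runs the same chain backwards. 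This yields in one stroke $rsubst_w(x,y,1)\equiv subst(x,y)$, reflexivity $rsubst_w(x,x,1)$, and — combined with (B2) — the antisymmetry $rsubst_w(x,y,1)\wedge rsubst_w(y,x,1)\equiv x=y$.

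Next, the monotonicity-under-$subst$ clauses. If $subst(y,z)$, then every $t$ in the defining family of $x\cdot y$ lies in that of $x\cdot z$ by transitivity of $subst$ (B3), so $subst(x\cdot y,x\cdot z)$ after comparing the two classes through (C1)--(C2), and item 3 of Theorem \ref{stat} gives $w(x\cdot y)\le w(x\cdot z)$; hence $rs_w^*(x,y)\le rs_w^*(x,z)$, i.e. $rsubst_w(x,y,r)\Rightarrow rsubst_w(x,z,r)$. A $subst$ hypothesis on the first argument is handled symmetrically, now also using item 3 of Theorem \ref{stat} to compare the denominators.

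The step I expect to be the real obstacle is any composition (quasi-transitivity) clause: from $rsubst_w(x,y,r)$ and $rsubst_w(y,z,s)$, derive $rsubst_w(x,z,t)$ for the claimed $t=t(r,s)$. The difficulty is that $rs_w^*$ is a ratio and $w(x\cdot z)$ is not controlled by $w(x\cdot y)$ and $w(y\cdot z)$ in isolation: one must expand $w(x)=w(x\cdot y)+w((-y)\cdot x)$ and $w(x\cdot y)=w(x\cdot y\cdot z)+w((-z)\cdot(x\cdot y))$ via items 4, 5 and 7 of Theorem \ref{stat}, bound the two "lost mass" terms, and only then extract $t$; absent an extra hypothesis (e.g. $subst$ between two of the three entities, or a normalisation of weights) the best uniform bound is the \L ukasiewicz-type value $t=\max\{0,r+s-1\}$, and checking that this is what the statement asserts, rather than merely shuffling identities from Theorem \ref{stat}, is the one place where care is needed. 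If instead the list only contains the three standard rough-inclusion clauses together with the boundedness and degree-$1$ statements above, there is no obstacle and the proof is the routine bookkeeping just sketched.
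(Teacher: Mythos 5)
The paper attaches no proof to Theorem~\ref{rset}, so there is nothing to compare your argument against; judged on its own, your proposal is essentially correct, and the one worry you flag turns out to be moot. The actual list consists of: (1) $subst(x,y)\equiv rs_w^*(x,y)=1$; (2) the same chained with $x\rightarrow y=V$ via Theorem~\ref{mimp}; (3) $(rs_w^*(x,y)=1)\wedge rsubst_w(z,x,r)\supset rsubst_w(z,y,r)$; and (4) $rs_w^*(x,-y)=1-rs_w^*(x,y)$. Item (3) is exactly the ``degree-$1$ on one side'' monotonicity you prove by comparing $z\cdot x$ with $z\cdot y$ through (B3), (C1)--(C2) and item~3 of Theorem~\ref{stat}; there is no genuine composition clause with two sub-unit degrees, so the \L ukasiewicz-type bound $\max\{0,r+s-1\}$ you brace for is never needed. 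Your treatment of item (1) is the only place requiring real mereological input, and you supply it correctly: $w(x\cdot y)=w(x)$ forces $w((-(x\cdot y))\cdot x)=0$ by item~7 of Theorem~\ref{stat}, whence $(-(x\cdot y))\cdot x=\emptyset$ by (W1)--(W2), whence $subst(x,x\cdot y)$ via (A4) (every part of $x$ overlapping $x\cdot y$), and (B2) closes the loop. The one omission is item (4), which you never state: it is a single line from the same decomposition, $w(x)=w(x\cdot y)+w(x\cdot(-y))$, divided by $w(x)$ --- your framework contains it, but you should say so rather than leave it under ``routine bookkeeping.'' (The reference to a scheme (W4) in the theorem statement is a typo in the paper; only (W1)--(W3) are defined, and nothing beyond them is used.)
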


\begin{enumerate}
\leftskip=2.3mm \labelsep=2mm
\itemsep=2.6pt
\item\ $subst(x,y)\equiv rs_w^*(x,y)=1$.
\item\ $subst(x,y)\equiv rs_w^*(x,y)=1 \equiv x\rightarrow y=V$.
\item\ $(rs_w^*(x,y)=1)\wedge rs_w^*(z,x,r)\supset rsubst(z,y,r)$.
\item\ $rs_w^*(x,-y)=1-rs_w^*(x,y)$.
\end{enumerate}
For practical purposes, we mention some  means by which one may produce rough subsets. For t-norms,  Archimedean t-norms, and their residual implications, see \cite{Polk4}. In particular, we recall the Hilbert style decompoosition of the Archimedean rough inclusions:
$$(res)\ T(x,y)=h(g(x)+g(y)),$$ where $h,g$ are mappings on the interval $[0,1]$ into $[0,1]$.
\begin{theorem}[Rough subsets from t-norms]\label{t} For any t-norm $T$, the formula $$rsubst_T(x,y,r)\equiv x\Rightarrow_Ty\geq r,$$ where $\Rightarrow_T$ is $T$ - induced residual implication, defines a rough subset relation\ (ii)\ In case t-norm $T$ is Archimedean, with decomposition $T(x,y)=h(g(x)+g(y))$ cf.  \cite{Polk4}, the formula $$rsubst_T(x,y)\equiv h(|x-y|)\geq r$$ defines the rough subset relation.\end{theorem}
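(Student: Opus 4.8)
\ The plan is to unwind the $T$-induced residual implication and to verify, in each of the two cases, the conditions that make a ternary relation a rough subset relation in the sense of Definition~\ref{ri} and Theorem~\ref{rset}: downward closure in the degree, $rsubst_T(x,y,r)\wedge s\le r\supset rsubst_T(x,y,s)$; coincidence of the unit level with crisp containment, i.e.\ $rsubst_T(x,y,1)$ holds exactly when $x$ is a subset of $y$ (which in case (ii) degenerates to $x=y$); and the transitivity rule of item~3 of Theorem~\ref{rset}, $rsubst_T(x,y,1)\wedge rsubst_T(z,x,r)\supset rsubst_T(z,y,r)$.

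For part (i) I would start from the description $x\Rightarrow_Ty=\sup\{t\in[0,1]:T(x,t)\le y\}$ and isolate three facts. Downward closure is immediate, since $\{r:x\Rightarrow_Ty\ge r\}$ is an initial segment of $[0,1]$. For the unit level, $T(x,1)=x$, so $x\le y$ puts $1$ into the set and hence $x\Rightarrow_Ty=1$; conversely, using left-continuity of $T$ (so that the supremum is attained) the equality $x\Rightarrow_Ty=1$ yields $T(x,1)=x\le y$, whence $x\Rightarrow_Ty=1\equiv x\le y$, which is $subst$ read mereologically on the interval. For the transitivity rule, monotonicity of $T$ in both arguments gives $\{t:T(z,t)\le x\}\subseteq\{t:T(z,t)\le y\}$ when $x\le y$, so $z\Rightarrow_Tx\le z\Rightarrow_Ty$; combined with $rsubst_T(z,x,r)$ this forces $z\Rightarrow_Ty\ge z\Rightarrow_Tx\ge r$.

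For part (ii) I would feed the decomposition $(res)$, namely $T(x,y)=h(g(x)+g(y))$ with $g$ decreasing and $h$ its (pseudo-)inverse, into the residuum formula: since $h$ is decreasing, $h(g(x)+g(t))\le y\iff g(t)\ge g(y)-g(x)$, so $x\Rightarrow_Ty=h\big((g(y)-g(x))^{+}\big)$ and, after symmetrising, $\min(x\Rightarrow_Ty,\,y\Rightarrow_Tx)=h(|g(x)-g(y)|)$, which under the generator normalization used in \cite{Polk4} is the advertised closed form $h(|x-y|)$. From that form downward closure is again trivial, the unit level reduces to $h(u)=1\iff u=0$ (normalization $h(0)=1$ plus strict monotonicity of $h$ on the effective domain), giving $h(|x-y|)=1\equiv x=y$, and the transitivity rule holds once $x=y$ forces $|z-y|=|z-x|$; the finer transitivity enjoyed by symmetric rough inclusions would instead invoke the triangle inequality $|z-y|\le|z-x|+|x-y|$ together with monotonicity of $h$.

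The step I expect to be the real obstacle is the bookkeeping around generators in part (ii): for a non-strict Archimedean $T$ the map $g$ is only a pseudo-inverse of $h$, the argument $g(x)+g(t)$ may leave the interval on which $h$ is injective, and the passage from $h(|g(x)-g(y)|)$ to $h(|x-y|)$ presupposes a definite normalization of the additive generator, so all of this must be pinned down (or the scope restricted to strict, normalized generators) before the closed form may be used. A lesser point is the left-continuity proviso in part (i): for a t-norm whose residuum does not attain its supremum, the converse half of the unit-level equivalence should be re-derived from $T(x,\,x\Rightarrow_Ty)\le y$ rather than from $T(x,1)$.
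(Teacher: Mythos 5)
Your proposal is sound and, in substance, it supplies the verification that the paper itself omits: the paper's "proof" of Theorem~\ref{t} consists entirely of the remark that the claim "follows from basic properties of t-norms and their residua" with a pointer to \cite{Polk4}. What you do — isolate the three rough-inclusion conditions (downward closure in $r$, the unit level coinciding with crisp containment, and the monotonicity rule $rsubst_T(x,y,1)\wedge rsubst_T(z,x,r)\supset rsubst_T(z,y,r)$) and check them first against $x\Rightarrow_Ty=\sup\{t:T(x,t)\le y\}$ and then against the Archimedean closed form — is exactly the argument the citation gestures at, so the two routes coincide; yours just exists on the page. Two of your own caveats deserve to be promoted from worries to corrections. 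First, the left-continuity proviso in part (i) is not optional: for the drastic t-norm one has $x\Rightarrow_Ty=1$ whenever $x<1$, even with $x>y$, so the unit-level equivalence fails and the theorem's "for any t-norm" should read "for any left-continuous (or continuous) t-norm"; note also that your proposed repair via $T(x,x\Rightarrow_Ty)\le y$ is circular, since that inequality is itself the residuation property and holds only under left-continuity. Second, in part (ii) the passage from $h(|g(x)-g(y)|)$ to $h(|x-y|)$ is not a derivation from the residuum at all in \cite{Polk4}: the Archimedean rough inclusion there is simply \emph{defined} by $h(|x-y|)\ge r$ and the axioms are verified directly from $h(u)=1\iff u=0$ and monotonicity of $h$, which sidesteps the generator-normalization bookkeeping you rightly identify as the sticking point. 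With the scope of (i) restricted as above and (ii) read as a direct definition rather than a consequence of (i), your argument is complete.
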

\begin{proof} It follows from basic properties of t-norms and their residua; eventually, please see proof in \cite{Polk4}.\end{proof}
We recall the notion of an information system cf. \cite{Paw} as a relational system  $(U, F, V)$ with the set $F$ of features, $V$ a set of feature values, over a domain $U$.  We let $Dis(x,y) = \{f\in F: f(x)\neq f(y)\}$ and $Ind(x,y) = U\times U\setminus Dis(x,y)$.
\begin{theorem} [Archimedean rough subsets in information systems]\label{arch}\ For an Archimedean t-norm $T$, and an information system $I$, we let $$rsubst_{T}^I(x,y,r)\equiv h(\frac{|Dis(x,y)|}{|F|})\geq r$$ which defines a rough subset relation. Specifically, for the \L ukasiewicz t-norm $L(x,y)=max\{0, x+y-1\}$, with$ h(x)=1-x$, we obtain $$rsubst_{T}^I(x,y,r)\equiv \frac{|Ind(x,y)|}{|F|}\geq r$$ \end{theorem}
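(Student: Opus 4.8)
The plan is to exhibit the asserted relation as an instance of Theorem \ref{t}(ii), reading the normalized discernibility count $d(x,y):=\frac{|Dis(x,y)|}{|F|}$ as the surrogate for the quantity $|x-y|$ occurring in the Archimedean decomposition $(res)$. First I would record the elementary facts about $d$: it takes values in $[0,1]$; it is symmetric, since $Dis(x,y)=Dis(y,x)$; it vanishes exactly when $f(x)=f(y)$ for every $f\in F$, i.e.\ exactly when $x$ and $y$ are indiscernible, which plays the role of the ``subset/equality'' relation inside the information system; and it obeys the triangle inequality $d(x,z)\le d(x,y)+d(y,z)$, coming from the set inclusion $Dis(x,z)\subseteq Dis(x,y)\cup Dis(y,z)$ (if $f(x)\ne f(z)$ then $f$ separates $x$ from $y$ or $y$ from $z$) after division by $|F|$.

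Second, I would invoke the Hilbert-style decomposition $T(x,y)=h(g(x)+g(y))$ of the Archimedean t-norm $T$, where $h$ is non-increasing with $h(0)=1$ and $h(u)=1$ only for $u=0$; this follows from $g(1)=0$, which forces $T(x,1)=h(g(x))=x$, so that $h$ is a pseudo-inverse of $g$. By Theorem \ref{t}(ii) the relation $h(|x-y|)\ge r$ is a rough subset relation, and an inspection of that proof shows that the only properties of $|x-y|$ it uses are exactly the four recorded above for $d$. Substituting $d$ for $|x-y|$ therefore makes
$$rsubst_T^I(x,y,r)\equiv h\left(\frac{|Dis(x,y)|}{|F|}\right)\ge r$$
a rough subset relation: downward closure in $r$ is immediate; $rsubst_T^I(x,x,1)$ holds because $h(0)=1$; $rsubst_T^I(x,y,1)$ holds iff $d(x,y)=0$ iff $x$ and $y$ are indiscernible, so degree $1$ recovers the underlying ``subset'' predicate; and the transitivity-type clauses (monotonicity under replacement of indiscernible objects, together with $T$-transitivity $rsubst_T^I(x,y,r)\wedge rsubst_T^I(y,z,s)\supset rsubst_T^I(x,z,T(r,s))$) transfer from the triangle inequality for $d$ just as in Theorem \ref{t}(ii) and Theorem \ref{rset}.

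Finally, I would specialize to the \L ukasiewicz t-norm. Its additive generator is $g(x)=1-x$, with pseudo-inverse $h(u)=1-u$ on $[0,1]$, and indeed $h(g(x)+g(y))=h(2-x-y)=\max\{0,x+y-1\}=L(x,y)$. Putting $h(u)=1-u$ into the displayed formula gives $rsubst_L^I(x,y,r)\equiv 1-\frac{|Dis(x,y)|}{|F|}\ge r$; since $Ind(x,y)$ collects exactly the features on which $x$ and $y$ agree, $|Ind(x,y)|=|F|-|Dis(x,y)|$, whence $1-\frac{|Dis(x,y)|}{|F|}=\frac{|Ind(x,y)|}{|F|}$, which is the claimed closed form.

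The main obstacle I anticipate is bookkeeping rather than computation: pinning down precisely which package of abstract axioms ``rough subset relation'' denotes in Theorem \ref{t} (as opposed to the concrete Definition \ref{ri}), and then verifying that the proof of Theorem \ref{t}(ii) really invokes only the metric-like features of $|x-y|$ --- the $[0,1]$ range, symmetry, the characterization of its zero set, and the triangle inequality --- so that the replacement by $d$ is legitimate. A secondary point to watch is the behaviour of $h$ near $0$ for nilpotent t-norms such as \L ukasiewicz: one must confirm that $h(u)=1$ forces $u=0$, so that degree $1$ continues to characterize indiscernibility exactly rather than an interval of small distances.
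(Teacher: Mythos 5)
Your proposal is correct, and it supplies an actual argument where the paper gives none: the paper's ``proof'' of Theorem \ref{arch} is literally the citation ``See \cite{Polk4}'', so there is nothing in the text to compare against except that reference. Your route --- treating $d(x,y)=\frac{|Dis(x,y)|}{|F|}$ as the stand-in for $|x-y|$ in Theorem \ref{t}(ii), verifying the four metric-like properties (range, symmetry, zero set equal to $F$-indiscernibility, and the triangle inequality from $Dis(x,z)\subseteq Dis(x,y)\cup Dis(y,z)$), and then pushing these through the decomposition $T(x,y)=h(g(x)+g(y))$ --- is exactly the standard derivation in the cited source, and your verification of $T$-transitivity (from $d(x,y)\le g(r)$, $d(y,z)\le g(s)$, monotonicity of $h$, and $h(g(r)+g(s))=T(r,s)$) is the step that actually does the work. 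Two small points. First, you silently correct the paper: the text defines $Ind(x,y)=U\times U\setminus Dis(x,y)$, which is ill-typed since $Dis(x,y)\subseteq F$; your reading $Ind(x,y)=F\setminus Dis(x,y)$, giving $|Ind(x,y)|=|F|-|Dis(x,y)|$, is the only one under which the second displayed formula is true, and it would be worth saying so explicitly. Second, for the \L ukasiewicz case the pseudo-inverse is $h(u)=\max\{0,1-u\}$ on $[0,2]$; your restriction to $h(u)=1-u$ is legitimate only because $d(x,y)\in[0,1]$, which you do establish but should connect to this point. Your closing worry about which axiom package ``rough subset relation'' denotes is well placed --- the paper uses the term for at least three non-equivalent formulas (Definition \ref{ri}, Theorem \ref{t}, Definition \ref{exp}) and only the proof of Theorem \ref{exp1} hints at the intended abstract conditions --- but that is a defect of the exposition, not of your argument.
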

\begin{proof} See \cite{Polk4}. \end{proof}
Rough subsets form a bridge between rough and fuzzy set theories. We give some justification for this claim.
\begin{theorem}[Fuzzy partitions from rough subsets]\label{fuzz} For any $y\in [0,1]$ and a rough subset $rsubst_w$, we let $rsubst_{w,y}(x)=r \equiv rsubst_w(x,y,r)$.  The symmetric form is $\vartheta(x,y) \equiv  rsubst_{w,y}(x)=r\wedge rsubst_{w,x}(y)=r$. We recall similarity classes  cf. \cite{Za1}, with the fuzzy membership functions $$\mu_{[x]_{\vartheta}}=r
 \equiv \vartheta(x,y)=r.$$ Then, the family $C=\{[x]_{\vartheta}: x\in U\}$ is a fuzzy partition in the sense of \cite{Za1}.\end{theorem}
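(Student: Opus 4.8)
The plan is to exhibit the fuzzy relation $R(x,y)=\mu_{[x]_{\vartheta}}(y)$ as a similarity relation in the sense of \cite{Za1}, that is, a reflexive, symmetric and transitive fuzzy relation, and then to invoke the representation of \cite{Za1} by which the similarity classes of such a relation form a fuzzy partition of $U$. First I unwind the definitions. By (rs1)--(rs2), $rsubst_{w,y}(x)=r$ holds precisely for $r\le\frac{w(x\cdot y)}{w(x)}$, so $\vartheta(x,y)=r$ holds precisely for $r\le\min\{\frac{w(x\cdot y)}{w(x)},\frac{w(y\cdot x)}{w(y)}\}$; reading $\mu_{[x]_{\vartheta}}(y)$ as the largest such $r$ and using commutativity of $\cdot$, this gives the closed form $\mu_{[x]_{\vartheta}}(y)=\frac{w(x\cdot y)}{\max\{w(x),w(y)\}}$, which makes the verifications transparent.

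Reflexivity and covering come next. By (B1), $subst(x,x)$, hence $x\cdot x=x$ by Theorem \ref{mimp}, so $rs_w^*(x,x)=\frac{w(x)}{w(x)}=1$ and therefore $\vartheta(x,x)=1$, i.e. $\mu_{[x]_{\vartheta}}(x)=1$. Thus each $[x]_{\vartheta}$ is a normal fuzzy set and $x$ belongs with full degree to its own class; in particular $\sup_{x\in U}\mu_{[x]_{\vartheta}}(y)=1$ for every $y$, so $C$ covers $U$. Symmetry is immediate, since $\vartheta$ is defined symmetrically and the closed form above is invariant under $x\leftrightarrow y$, whence $\mu_{[x]_{\vartheta}}(y)=\mu_{[y]_{\vartheta}}(x)$.

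The transitivity of $R$ is the delicate point, and I expect it to be the main obstacle. To obtain $\mu_{[x]_{\vartheta}}(z)\ge\min\{\mu_{[x]_{\vartheta}}(y),\mu_{[y]_{\vartheta}}(z)\}$ I would start from the decomposition item 7 of Theorem \ref{stat}, $w(y)=w(x\cdot y)+w((-x)\cdot y)$, together with item 3 of Theorem \ref{rset}, which already delivers $r$-transitivity when the intervening inclusion is exact; the work is to propagate this to two inexact inclusions. The most robust route, for which Theorems \ref{t} and \ref{arch} have already prepared the ground, is to take $\vartheta$ induced by a t-norm $T$ (or by an Archimedean $T$ in an information system): the $T$-residual implication $\Rightarrow_T$ is $T$-transitive, hence so is its symmetrisation $\vartheta$, which is precisely the transitivity required by \cite{Za1} relative to $T$, and for the \L ukasiewicz $T$ reduces to the classical one on the $1$-cut. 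Finally, to see that $C$ is a genuine partition into distinct classes, observe that $\mu_{[x]_{\vartheta}}(y)=1$ forces $subst(x,y)\wedge subst(y,x)$, hence $x=y$ by (B2); so the crisp $1$-cut of $[x]_{\vartheta}$ is $\{x\}$, distinct classes are distinct fuzzy sets, and together with reflexivity, symmetry, transitivity and covering this displays $C$ as a fuzzy partition of $U$ in the sense of \cite{Za1}.
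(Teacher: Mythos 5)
The paper itself offers no proof of Thm.\ref{fuzz} --- it defers entirely to \cite{Polk4} and \cite{Polk3} --- so there is nothing to compare line by line. Your overall strategy (verify that $\vartheta$ is a similarity relation in the sense of \cite{Za1}, i.e.\ reflexive, symmetric and transitive, then invoke Zadeh's representation of similarity classes as a fuzzy partition) is the natural and intended one, and your treatment of reflexivity, symmetry, covering, and distinctness of classes via (B1), (B2) and Thm.\ref{rset} is correct.

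The gap is exactly where you suspected it: transitivity, and it is genuine on two counts. First, you do not prove it; that paragraph is a plan (``I would start from\ldots'', ``the work is to propagate this\ldots'') followed by a change of hypothesis. Second, and more seriously, for the general weight-based rough subset of (rs1)--(rs2) the max--min transitivity that \cite{Za1} requires is simply false. Using your own closed form $\mu_{[x]_{\vartheta}}(y)=w(x\cdot y)/\max\{w(x),w(y)\}$: take exterior entities $a,b$ of equal weight $u$ and put $x=a$, $y=a+b$, $z=b$. By item 5 of Thm.\ref{stat}, $w(y)=2u$, so $\mu_{[x]_{\vartheta}}(y)=\mu_{[y]_{\vartheta}}(z)=\frac{1}{2}$, while $x\cdot z=\emptyset$ gives $\mu_{[x]_{\vartheta}}(z)=0<\min\{\frac{1}{2},\frac{1}{2}\}$. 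Hence the statement cannot be established for an arbitrary $rsubst_w$ with min-transitivity; one must restrict to the transitive, symmetric rough subsets supplied by Thms.\ref{t}, \ref{arch} and \ref{props} (which is what the cited sources do) and work with Zadeh's max-$\star$ generalization of transitivity, $\star$ being the relevant t-norm. Your pivot to the $T$-induced case is the right instinct, but it silently alters the hypothesis of the theorem and still leaves open that for $T\neq\min$ the level sets of a $T$-transitive relation are not equivalence relations, so the ``partition'' conclusion needs the max-$\star$ version of Zadeh's theorem, not the max--min one. To close the argument you should state the restriction explicitly, derive $T(\vartheta(x,y),\vartheta(y,z))\le\vartheta(x,z)$ from the residuation inequality $T(a,a\Rightarrow_T b)\le b$ (or from Thm.\ref{props} in the Archimedean case), and then cite the appropriate partition theorem; as written the proof does not go through.
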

 For the proof we refer the reader to \cite{Polk4} or \cite{Polk3}.  Rough subsets turn out to be very useful in formalization of the idea of granulation.
\section{Granular computing}
 It is a very popular nowadays form of computation within the rough set community. The idea of computing with granules of knowledge was put forth in Zadeh \cite{Zade}. Our idea of granulation was given in \cite{PolkC}. A similar approach with usage of templates was proposed in  \cite{Hoa}. For other approaches, see some recent works \cite{Yao2}, \cite{Yao3}, \cite{Lin4}.

\begin{definition} [Granule] \label{grandef}\ Let $rsubst_w$ be a rough subset. Given  $x\in U$  and $r\in [0,1]$, the granule $g_w(x,r)$ is the class $Cls\{y: rsubst_w(y,x,r)\}$.\end{definition}
Cf.  \cite{Polk3} for properties of granules, in particular, see the proof of
\begin{theorem}[Properties of granules]\label{props} For  any Archimedean t-norm $T$, $x, y\in [0,1]$, the transitivity of $T$ - induced rough subsets holds true: $$rsubst_T(x, y, r)\wedge rsubst_T(y, z, s)\supset rsubst_T(x, z, T(r,s))$$ Moreover, each such rough subset relation  is symmetric: $rsubst_T(x, y, r)\equiv rsusbt_T(y, x, r)$.\end{theorem}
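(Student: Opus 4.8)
The plan is to base both assertions on the Hilbert-style representation of Archimedean rough subsets recorded in Theorem~\ref{t}(ii): for an Archimedean $T$ with decomposition $T(u,v)=h(g(u)+g(v))$ one has $rsubst_T(x,y,r)\equiv h(|x-y|)\ge r$, where $g$ is the additive generator of $T$ (continuous, strictly decreasing, $g(1)=0$, with $g(0)$ either finite or $+\infty$) and $h$ is its pseudo-inverse (continuous, non-increasing, $h(0)=1$, positive on $[0,g(0))$ where it inverts $g$, and identically $0$ on $[g(0),+\infty]$ in the nilpotent case). Granting this, symmetry costs nothing: $|x-y|=|y-x|$, so the inequality $h(|x-y|)\ge r$ that defines $rsubst_T(x,y,r)$ is the very same inequality that defines $rsubst_T(y,x,r)$.

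For transitivity I would first dispose of the trivial cases: if $r=0$ or $s=0$ then $T(r,s)=0$, and $rsubst_T(x,z,0)$, i.e.\ $h(|x-z|)\ge 0$, holds automatically. So assume $r,s>0$. From $h(|x-y|)\ge r>0$ the argument $|x-y|$ lies in $[0,g(0))$, where $g\circ h$ acts as the identity; applying the strictly decreasing map $g$ to $h(|x-y|)\ge r$ gives $|x-y|=g(h(|x-y|))\le g(r)$, and symmetrically $|y-z|\le g(s)$. The triangle inequality for the absolute value on $[0,1]$ then yields $|x-z|\le|x-y|+|y-z|\le g(r)+g(s)$, and applying the non-increasing map $h$ together with the decomposition gives $h(|x-z|)\ge h(g(r)+g(s))=T(r,s)$, that is, $rsubst_T(x,z,T(r,s))$, as claimed. (When $T$ is nilpotent and $g(r)+g(s)\ge g(0)$ we have $T(r,s)=0$ and this last bound is again immediate.)

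The only place requiring real care is the bookkeeping around the pseudo-inverse in the nilpotent case: one must track on which subinterval $g\circ h$ genuinely acts as the identity, and peel off the degenerate subcases where a degree, or the sum $g(r)+g(s)$, crosses $g(0)$. This is fiddly but introduces no new idea — once the generator is stripped away, the whole argument is nothing more than the triangle inequality for $|\cdot|$, which is also precisely why the symmetric form of transitivity holds at all. A fully detailed argument, handling as well the residual-implication form, is given in~\cite{Polk3} (see also~\cite{Polk4}).
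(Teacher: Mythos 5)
Your proof is correct and follows essentially the route the paper intends: the paper's one-line proof merely defers to Theorem~\ref{arch} (equivalently to the representation in Theorem~\ref{t}(ii)) and to \cite{Polk4}, and the argument there is exactly your generator-plus-triangle-inequality computation, with symmetry coming from $|x-y|=|y-x|$. Your explicit bookkeeping for the pseudo-inverse in the nilpotent case is more careful than anything the paper writes down, but it introduces no genuinely different idea.
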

\begin{proof} Thm.\ref{props} is a direct consequence of Thm. \ref{arch}.\end{proof} Thm.\ref{props} implies directly the following
\begin{theorem} [Granules as sets] \label{gran1} For a transitive and symmetric rough inclusion $r_subst_w$, each granule $g_w(x,r)$ is identical with the set $\{y: rsubst_w(y,x,r)\}$.\end{theorem}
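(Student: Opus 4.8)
The plan is to prove the identity extensionally: I will show that an individual $z$ satisfies $subst(z,g_w(x,r))$ --- i.e. $z$ is an ingredient of the mereological class $g_w(x,r)$ --- if and only if $rsubst_w(z,x,r)$, so that the granule and the set $\{y:rsubst_w(y,x,r)\}$ possess exactly the same members. As a harmless preliminary, note that $w(x\cdot x)/w(x)=1\geq r$, so the centre $x$ always belongs to $\mathcal{F}:=\{y:rsubst_w(y,x,r)\}$; hence $\mathcal{F}\neq\emptyset$ and, by (A5), the class $Cls(\mathcal{F})=g_w(x,r)$ exists.

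One inclusion is built into the class operator. If $z\in\mathcal{F}$, then postulate (C1) of Definition \ref{class} gives at once $subst(z,Cls(\mathcal{F}))=subst(z,g_w(x,r))$, so every member of $\mathcal{F}$ is an ingredient of the granule.

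For the converse, assume $subst(z,g_w(x,r))$. Since $subst(z,z)$ by (B1), postulate (C2) --- instantiated at the ingredient $z$ and then at $y:=z$ --- yields a witness $w\in\mathcal{F}$ with $Ov(z,w)$, and by Definition \ref{over4} a common ingredient $u$ with $subst(u,z)$ and $subst(u,w)$. I then chain rough-subset degrees, invoking Theorem \ref{props} for the transitivity and symmetry of $rsubst_w$ and the t-norm unit law $T(1,r)=r$. From $subst(u,w)$ and Theorem \ref{mimp} we get $u\cdot w=u$, hence $rsubst_w(u,w,1)$; with $rsubst_w(w,x,r)$ (the meaning of $w\in\mathcal{F}$), transitivity gives $rsubst_w(u,x,T(1,r))=rsubst_w(u,x,r)$. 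Similarly $subst(u,z)$ gives $rsubst_w(u,z,1)$, which symmetry turns into $rsubst_w(z,u,1)$; a final application of transitivity to $rsubst_w(z,u,1)$ and $rsubst_w(u,x,r)$ gives $rsubst_w(z,x,T(1,r))=rsubst_w(z,x,r)$, i.e. $z\in\mathcal{F}$. Combining the two inclusions establishes the identity.

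The step I expect to be the crux is passing from the abstract relation $subst(z,g_w(x,r))$ --- which, for an entity produced by the operator $Cls$, has no direct computational content --- to a concrete overlap with a member of $\mathcal{F}$; this is exactly what the class postulate (C2) provides, and nothing in the argument moves without it. The remaining delicacy is degree bookkeeping: one must verify that every use of transitivity in the chain has a unit argument, so that no degradation below $r$ occurs, and that symmetry is applied in the correct orientation. It is precisely here that the two hypotheses matter --- for a rough subset that is not transitive and symmetric (which, by Theorem \ref{props}, the Archimedean ones of Theorem \ref{arch} are), $Cls(\mathcal{F})$ may acquire synthetic ingredients, glued by the class operator out of several members of $\mathcal{F}$, that themselves fail $rsubst_w(\cdot,x,r)$, and the granule would then strictly contain the set.
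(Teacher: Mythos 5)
Your proof is correct and follows exactly the standard argument: the paper itself omits the proof (deferring to \cite{Polk3}), and the argument given there is precisely yours --- (C1) for the inclusion of the set into the granule, and (C2) plus the definition of $Ov$ plus transitivity and symmetry with unit degrees (using $T(1,r)=r$) for the converse. The only blemish is notational: you reuse the letter $w$ both for the weight indexing $rsubst_w$ and for the witness supplied by (C2), which should be renamed.
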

For the proof cf. \cite{Polk3}.
\begin{theorem} [Properties of granules] \label{granprop} The following are properties of granules in domains of information systems.\end{theorem}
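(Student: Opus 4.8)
The plan is to establish each property of granules over an information system by unwinding \ref{grandef} together with the two structural facts already at our disposal: Theorem \ref{gran1}, which (for transitive and symmetric rough inclusions, and in particular the Archimedean ones arising from Theorem \ref{arch}) collapses the class $g_w(x,r)$ to the plain set $\{y:rsubst_w(y,x,r)\}$, and Theorem \ref{arch}, which gives the explicit closed form $rsubst_T^I(x,y,r)\equiv h(|Dis(x,y)|/|F|)\ge r$, i.e.\ for the \L ukasiewicz case membership in $g_w(x,r)$ is simply $|Ind(x,y)|/|F|\ge r$. Thus the first step is to replace every occurrence of the class operator $Cls$ by ordinary set membership, reducing all assertions to elementary statements about the finite quantity $indeg(x,y):=|Ind(x,y)|/|F|\in[0,1]$.

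The second step is to read off the monotonicity and boundary behaviour of granules in $r$. Since $g_w(x,r)=\{y:indeg(x,y)\ge r\}$ (writing the \L ukasiewicz case for concreteness; the general Archimedean case is the same with $h(|Dis|/|F|)$ in place of $indeg$), one gets immediately $r\le s\Rightarrow g_w(x,s)\subseteq g_w(x,r)$; that $g_w(x,1)=[x]_{Ind}$, the indiscernibility class of $x$, because $indeg(x,y)=1$ iff $Dis(x,y)=\emptyset$ iff $x\,Ind\,y$; that $g_w(x,0)=U$; and that $x\in g_w(x,r)$ for every $r$ since $indeg(x,x)=1$. The symmetry $y\in g_w(x,r)\iff x\in g_w(y,r)$ is inherited verbatim from the symmetry clause of Theorem \ref{props}. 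If the enumerated properties also include a transitivity-type containment — e.g.\ $rsubst_w(z,x,r)$ and $rsubst_w(x,y,s)$ together forcing $z\in g_w(y,T(r,s))$, hence $g$-granules nest along $T$-composed radii — this is exactly the transitivity clause of Theorem \ref{props} restated in granule language, so it requires only translation. One should also note the covering property: $\bigcup_{x\in U}g_w(x,r)=U$ for every $r\le 1$, trivial from $x\in g_w(x,r)$.

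The third step handles any property relating granules to the Boolean/mereological operations of Definition \ref{tarski} or to the rough subset calculus of Theorem \ref{rset} — for instance, behaviour of $g_w(x,r)$ under $-$, or a statement that $rs_w^*$ of a point against a granule is controlled by $r$. Here I would invoke Theorem \ref{rset}(4), $rs_w^*(x,-y)=1-rs_w^*(x,y)$, and Theorem \ref{mimp}/Theorem \ref{stat} to rewrite the relevant inclusions, again pushing everything down to inequalities between the numerical degrees; the finiteness of $F$ means there is no analytic subtlety, only bookkeeping of which subsets of features are involved.

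The main obstacle is not any single deep argument but rather that the statement of Theorem \ref{granprop} in the excerpt is an unfilled ``the following are properties'' placeholder, so the real work is identifying precisely which clauses are intended and then checking that each is genuinely a corollary of \ref{gran1}, \ref{arch}, and \ref{props} rather than requiring a hypothesis (transitivity, symmetry, Archimedean origin of the t-norm) that has been dropped. Concretely, the delicate point is the use of Theorem \ref{gran1}: it applies only to \emph{transitive and symmetric} rough inclusions, which by Theorem \ref{props} are guaranteed for $T$-induced inclusions with $T$ Archimedean but not for an arbitrary weight $w$; so for each property I must either restrict to the Archimedean information-system setting (where \ref{arch} supplies the explicit form) or supply a separate class-level argument via the class postulates (C1),(C2) and the uniqueness granted by (A4)–(A5). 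Once that scoping is pinned down, every individual property reduces to a one- or two-line verification of the type sketched above.
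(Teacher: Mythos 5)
The paper offers no proof of Theorem~\ref{granprop} at all --- it simply points to \cite{Polk3} --- so there is no argument of record to compare yours against; what can be judged is whether your plan would deliver the six enumerated clauses. It would, and your central move (use Theorem~\ref{gran1} to replace the class $g_w(x,r)$ by the set $\{y: rsubst_w(y,x,r)\}$, then reduce everything to threshold inequalities, with Theorem~\ref{props} supplying symmetry and $T$-transitivity) is the standard route. Two remarks on fit. First, clauses (1)--(3) are cheaper than your plan suggests: (3) is literally postulate (C1) of Definition~\ref{class}; (1) follows from (C1) once $Cmp(y,x)$ gives $rs_w^*(y,x)=1$ by Theorem~\ref{rset}(1) and hence $rsubst_w(y,x,r)$ for every $r$; and (2) is nothing but transitivity (B3) of the ingredient relation --- no reduction to sets and no Archimedean hypothesis is needed there. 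Second, clause (6), which you did not anticipate, is the only one requiring an ingredient beyond what you list: from clause (5) one needs some $t<1$ with $T(r,t)\geq s$, which holds because an Archimedean $T$ is continuous and $T(r,1)=r>s$; that continuity should be invoked explicitly. Your worry about the scope of Theorem~\ref{gran1} (transitive and symmetric inclusions only) is well placed and is exactly why the theorem inserts the qualifier ``in case of $T$-induced transitive rough subset'' before items (5)--(6).
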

\begin{enumerate}
\item\ $Cmp(y, x)\supset Cmp(y, g_w(x,r))$.
\item\ $Cmp(y, g_w(x,r))\wedge Cmp(z,y) \supset Cmp(z, g_w(x,r))$.
\item\ $rsubst(y,x,r)\supset Cmp(y, g_w(x,r))$.
\item\ $s<r\supset Cmp(g_w(x,r), g_w(x,s))$.
In case of t-norm $T$ - induced transitive rough subset $rsubst_T$:
\item\ $Cmp(y, g_w(x,r))\supset Cmp(g_w(y,s), g_w(x, T(r,s)))$ for each $r, s, x, y\in [0,1]$.
\item\ $(Cmp(y,  g_w(x,r)\wedge 1> r > s)\supset \exists t<1. Cmp(g_w(y,t), g_w(x,s))$.
\end{enumerate}
Proofs can be found in \cite{Polk3}.

\section{Granular deciders} They were introduced in \cite{PolkC} and studied in \cite{PolArt}. They parallel other rough set decision deciders like RIONA  \cite{Riona} or Rseslib 3 \cite{Reslib}. Systems obtained by means of granulation of data are called here granular deciders.
\begin{definition}[Granular decider for a decision system]\label{reflex}\end{definition}
Step 1.\ For a decision system $\mathcal{D}$ = $(U,F,W,d,V_d)$ with decision feature $d$ and its value set $V_d$, $x\in U$,  $r\in \{\frac{1}{|F|}, \frac{2}{|F|}, \ldots,\- 1\}$, and,  $rsubst_w$ a symmetric and transitive rough subset, compute the granule $g_w(x,r)$ for each pair $x, r$.

Step 2.\ Choose a radius $r$ and cover $U$ by computed in Step 1 granules $g_w(x,r)$ for $x\in U$.\ Choose a  non-reducible sub-covering $\mathcal{C}$.

Step 3.\ For each feature $f\in F\cup \{d\}$ ,we define a granular mirror $\overline{f}: {\mathcal{C}}\rightarrow V$ of the feature $f$ by a chosen strategy of voting  for example majority voting MV: with MV, $\overline{f}(g) = MV(\{f(x): x\in g\})$.

Step 4.\ Form the granular decider $\overline{{\mathcal{D}}}$ = $(\mathcal{C}, \{\overline{f}: f\in F\}, V, \overline{d},  V_d)$.

Step 5.\ Apply the chosen decision making protocol to $\overline{{\mathcal{D}}}$.

Step 6.\ Repeat for other values of the radius $r$.

Step 7.\ Output the results for the best radius.\

In \cite{PolArt}, a number of protocols were tested with granular deciders with excellent results: accuracies were higher for some radii than literature results and the size of data was substantially reduced. We give an example.

\begin{example} [Australian credit]\label{credit}\ We show comparative results for the data set Australian Credit \cite{irvine} of 15 features and 690 objects. The best result by any non - rough - set based  method was accuracy = 0.869, coverage not given, by Cal 5, see Statlog \cite{stat}. Best results by rough set methods was by  S. H. Nguyen \cite{Hoa}: accuracy = 0.929 with coverage = 0.623 obtained by {\em simple.templates} method. The best result for coverage = 1.0 was obtained by S. H. Nguyen, loc.cit., with accuracy = 0.875 by {\em tolerance.gen.templ.} method. The best result obtained by the method of granular deciders was accuracy = 0.880 with coverage = 1.0 by the method {\em $8_v1_{variant\ 5}$}, see \cite{PolArt}.\end{example}
We have mentioned the logical content of rough sets. We now illustrate the intensional aspect of rough sets. For intensional logics see \cite{Mon}.
\section{Intensional logic for information/decision systems}\label{inten}

We define rough subset relations $\nu_3$ and $nu_L$ on pairs of sets. For $\nu_L$, see \cite{Polk4}.

\begin{definition}[Intensional rough subsets]\label{rint}

$\nu_3(X,Y,1)$ if and only if $X\subseteq Y$.\

$\nu_3(X,Y,0)$ if and only if $X\cap Y=\emptyset$.\

$\nu_3(X, Y, \frac{1}{2})$ otherwise.\

$\nu_L(X, Y, r)$ if and only if $h(\frac{|X\setminus Y|}{|X|})\geq r$. \end{definition}

Given a set of unary predicates $\mathcal{P}$ interpreted in the domain $U$ of the system $(U,A,V,d, V_d)$, we define intensional logic $IL$ on a system $G$ of granules in $U$ by introducing,  for a given $P$, the intention of $P$ as $\Im(P): G\rightarrow [0,1]$ with extensions at particular granules  $\Im^{\vee}(g,[P])=max\{r: \nu(g, [P],r)\}$. \

We regard a formula $\phi$ as valid if and only if $\nu(g, [\phi],1)$ holds at each $g\in g$. A paraphrase is in order.

\begin{theorem}[Intensional truth]\label{it}\ A formula $\phi$ is true at a granule $g$ if and only if $g\subseteq [\phi]$. In particular, for a decision rule $r: \alpha\rightarrow\beta$, $r$ is true at a granule $g$ if and only if $g\cap [\alpha]\subseteq [\beta]$.\end{theorem}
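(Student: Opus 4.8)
The plan is to peel back the definition of ``true at a granule'' to the underlying degree-one condition and observe that, for each of the rough subset relations at hand, degree one is exactly set inclusion; the assertion about decision rules is then a one-line set-theoretic rearrangement using the classical reading of the implication on extensions.

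\emph{Step 1: reduce truth to inclusion.} By the validity convention stated just before the theorem, $\phi$ being true at $g$ means $\nu(g,[\phi],1)$ holds. For $\nu=\nu_3$ this is, verbatim, the first clause of Definition~\ref{rint}, so $\nu_3(g,[\phi],1)\equiv g\subseteq[\phi]$ and there is nothing to prove. For $\nu=\nu_L$ one computes directly: $\nu_L(g,[\phi],1)$ asserts $h(\frac{|g\setminus[\phi]|}{|g|})\geq 1$. Since $h$ maps $[0,1]$ into $[0,1]$ this forces the value to be exactly $1$; because $h(0)=1$, $h\leq 1$ on $[0,1]$, and $h$ is strictly decreasing near $0$ (for the \L ukasiewicz generator $h(t)=1-t$), the equation $h(t)=1$ has the unique solution $t=0$, whence $|g\setminus[\phi]|=0$, i.e. $g\subseteq[\phi]$. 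Thus in both cases ``$\phi$ is true at $g$'' is equivalent to ``$g\subseteq[\phi]$'', which is the first assertion; more abstractly, the only property of $\nu$ used is $\nu(X,Y,1)\equiv X\subseteq Y$, which one checks once for each admissible $\nu$.

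\emph{Step 2: specialise to a decision rule.} Apply Step~1 with $\phi:=\alpha\to\beta$, together with the fact that the extension of the implication connective is the classical set $[\alpha\to\beta]=(U\setminus[\alpha])\cup[\beta]$. Then the rule $r$ is true at $g$ iff $g\subseteq(U\setminus[\alpha])\cup[\beta]$. Checking membership pointwise, an element $x\in g$ lies in $(U\setminus[\alpha])\cup[\beta]$ exactly when $x\notin[\alpha]$ or $x\in[\beta]$; letting $x$ range over $g$, this says precisely $g\cap[\alpha]\subseteq[\beta]$, the second assertion.

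\emph{Where the work is.} The argument is mostly bookkeeping; the one step that needs care is the $\nu_L$ computation, where one must invoke the precise properties of the generator recorded in the decomposition $(res)$ and Theorem~\ref{arch} --- namely $h(0)=1$ together with strict monotonicity near $0$ --- to conclude that degree one genuinely forces the discrepancy set $g\setminus[\phi]$ to be empty. With only weak monotonicity of $h$ one would obtain $g\subseteq[\phi]$ merely modulo a set on which $h$ is flat at the value $1$, so these generator hypotheses are not decorative. Everything else reduces to the definitions of extension $[\cdot]$, inclusion, and intersection.
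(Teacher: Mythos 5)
Your proposal is correct and follows the same route as the paper, whose entire proof is the one-line remark that the claim holds ``by definition of $\nu$ in Def.~\ref{rint}''; you simply carry out the unfolding explicitly for both $\nu_3$ and $\nu_L$ and then do the standard set-theoretic rearrangement $g\subseteq (U\setminus[\alpha])\cup[\beta] \equiv g\cap[\alpha]\subseteq[\beta]$ for the decision-rule case. Your observation that the $\nu_L$ case silently uses the fact that the generator $h$ attains the value $1$ only at $0$ (true for the \L ukasiewicz choice $h(t)=1-t$, but not for an arbitrary weakly monotone $h$) is a legitimate point of care that the paper's terse proof passes over.
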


 \begin{proof} By definition of $\nu$ in Def.\ref{rint}.\end{proof}

 \begin{theorem}[validity]\label{truth}  A formula $\phi$ is valid in the logic $IL$ with respect to a set $G$ of granules if and only if $Cls(G)\subseteq [\phi]$. In case $\bigcup G = U$, this condition comes down to $[\phi]=U$. In the particular case of a decision rule $\eta:\alpha\rightarrow\beta$, $\eta$ is valid if and only if $[\alpha]\subseteq [\beta]$.\end{theorem}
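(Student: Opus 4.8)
The plan is to reduce validity to truth at every granule, then to convert the resulting universally quantified family of set‑inclusions into a single inclusion involving the mereological class, and finally to read off the two stated specializations as corollaries.

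First I would unfold the definition of validity: $\phi$ is valid with respect to $G$ precisely when $\nu(g,[\phi],1)$ holds for every $g\in G$. By the clause $\nu_3(X,Y,1)\iff X\subseteq Y$ of Definition \ref{rint} — equivalently, by Theorem \ref{it} — this says exactly that $g\subseteq[\phi]$ for each $g\in G$. Second, I would pass from the family to its union: $(\forall g\in G)\,g\subseteq[\phi]$ holds iff $\bigcup G\subseteq[\phi]$. The essential step is to identify $\bigcup G$ with $Cls(G)$. Since, by Theorem \ref{gran1}, each granule (for a symmetric, transitive rough subset) is a genuine subset of $U$, and the denotation of $subst$ on subsets of $U$ is ordinary inclusion, one verifies that $\bigcup G$ satisfies postulates (C1) and (C2) of Definition \ref{class}: (C1) is immediate, while (C2) uses the absence of an empty entity, so that any nonempty $y\subseteq\bigcup G$ contains a point lying in some member of $G$, which yields the required overlap. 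Uniqueness of the class (axioms (A4),(A5)) then gives $Cls(G)=\bigcup G$, so that $\phi$ is valid in $IL$ if and only if $Cls(G)\subseteq[\phi]$.

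The two specializations follow at once. If $\bigcup G=U$, then $Cls(G)=U$, and since always $[\phi]\subseteq U$, the inclusion $U\subseteq[\phi]$ forces $[\phi]=U$. For a decision rule $\eta:\alpha\rightarrow\beta$, Theorem \ref{it} expresses truth at $g$ as $g\cap[\alpha]\subseteq[\beta]$; running the same union argument, $\eta$ is valid iff $\bigl(\bigcup G\bigr)\cap[\alpha]\subseteq[\beta]$, and when $\bigcup G=U$ this collapses to $[\alpha]\subseteq[\beta]$.

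I expect the only genuine subtlety — the hard part — to be the identification $Cls(G)=\bigcup G$, i.e.\ making precise that the abstract mereological class of a family of subsets of $U$ coincides with set‑theoretic union (and in particular invoking Theorem \ref{gran1} so that granules really are sets). Everything else is a routine unwinding of Definition \ref{rint}, Theorem \ref{it}, and elementary set algebra.
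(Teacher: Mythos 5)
Your proposal is correct, and it is the natural argument the paper intends: the paper actually states Theorem \ref{truth} without any proof, so there is nothing to compare against beyond the surrounding machinery (Definition \ref{rint} and Theorem \ref{it}), which you unwind exactly as one should. The one step the paper leaves entirely implicit --- the identification $Cls(G)=\bigcup G$ via Theorem \ref{gran1} and the class postulates (C1)--(C2) --- is precisely the step you isolate and justify, and you are also right to observe that the final specialization $[\alpha]\subseteq[\beta]$ requires the standing assumption $\bigcup G=U$ (without it one only gets $Cls(G)\cap[\alpha]\subseteq[\beta]$), a hypothesis the theorem's last sentence does not restate but evidently relies on.
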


 Properties of truth in the logic $IL$ are collected in the following theorem.

\begin{theorem}[On truth in IL]\label{Tr} The following hold.\end{theorem}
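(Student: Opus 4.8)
The plan is to reduce every assertion in the list to an elementary containment between extensions, exploiting Theorem~\ref{it}: a formula $\phi$ is true at a granule $g$ exactly when $g\subseteq[\phi]$, and, more generally, a rule $\alpha\rightarrow\beta$ is true at $g$ exactly when $g\cap[\alpha]\subseteq[\beta]$. First I would record that the extension operator $[\cdot]$ is a Boolean homomorphism into the powerset $2^U$: $[\phi\wedge\psi]=[\phi]\cap[\psi]$, $[\phi\vee\psi]=[\phi]\cup[\psi]$, $[\neg\phi]=U\setminus[\phi]$, and $[\alpha\rightarrow\beta]=(U\setminus[\alpha])\cup[\beta]$, the last two matching Definition~\ref{impl} under the set interpretation of the Tarski algebra operations from Definition~\ref{tarski}. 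With this dictionary in hand, every clause about truth in $IL$ becomes a tautology of the algebra of subsets of $U$.

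Next I would dispatch the clauses one by one. Hereditariness along sub-granules is immediate: $g'\subseteq g\subseteq[\phi]$ forces $g'\subseteq[\phi]$. The conjunction clause is the equivalence $g\subseteq[\phi]\cap[\psi]\iff(g\subseteq[\phi]\wedge g\subseteq[\psi])$; the disjunction clause is the one-way implication $(g\subseteq[\phi])\vee(g\subseteq[\psi])\Rightarrow g\subseteq[\phi]\cup[\psi]$, and I would note that the converse fails in general, which is why only one direction is stated. The decision-rule clause is the set identity $g\subseteq(U\setminus[\alpha])\cup[\beta]\iff g\cap[\alpha]\subseteq[\beta]$, proved by the usual shuffling of a complement across an inclusion; a granule-local modus ponens (from $g\subseteq[\alpha]$ and $g$ validating $\alpha\rightarrow\beta$ infer $g\subseteq[\beta]$) then drops out of it. For clauses phrased in terms of the extension map $\Im^{\vee}(g,[\phi])=max\{r:\nu(g,[\phi],r)\}$ I would simply read off from Definition~\ref{rint} that $\nu_3$ takes only the values $0,\tfrac12,1$, so $\Im^{\vee}(g,[\phi])=1$ is the crisp truth already handled, $\Im^{\vee}(g,[\phi])=0$ means $g\cap[\phi]=\emptyset$, and the value $\tfrac12$ is exactly the residual case; checking the remaining clauses then amounts to a short case split on these three values, with $\nu_L$ handled via the monotone Hilbert factor $h$ as in Theorem~\ref{arch}.

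The step I expect to require the most care is the passage from ``true at every granule of a cover'' to validity, i.e.\ the clauses tying Theorem~\ref{Tr} back to Theorems~\ref{it} and~\ref{truth}. There the object on the left is the mereological class $Cls(G)$, not a set union, so one must invoke the identification, valid in the finite domain of an information system, of $Cls$ of a family of sets with $\bigcup G$ --- this is precisely the content that makes Theorem~\ref{gran1} applicable and is what licenses replacing $Cls(G)\subseteq[\phi]$ by $\bigcup G\subseteq[\phi]$ and, when $\bigcup G=U$, by $[\phi]=U$. Once that bridge is in place, the validity clauses, including $[\alpha]\subseteq[\beta]$ for a valid rule $\alpha\rightarrow\beta$, follow from the homomorphism property of $[\cdot]$ exactly as in the single-granule case, and the whole theorem reduces to the Boolean bookkeeping set up in the first step.
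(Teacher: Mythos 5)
Your proposal is correct and follows exactly the route the paper intends (the paper states Theorem~\ref{Tr} without proof): reduce each clause via Theorem~\ref{it} to the containment $g\subseteq[\phi]$ and then use the fact that $[\cdot]$ commutes with the Boolean connectives, so that clause (1) is the closure of $\{X: X\subseteq[\phi]\}$ under union and intersection, clause (2) is $g\subseteq[\phi]\cap[\psi]\subseteq[\phi]\cup[\psi]$, and clause (3) is the observation that $g\subseteq[\psi]$ already gives $g\cap[\phi]\subseteq[\psi]$. The only caveat is that a sizable part of your write-up (the $Cls(G)$ versus $\bigcup G$ bridge, the case split on the values of $\nu_3$, and the graded clauses) actually proves Theorems~\ref{truth} and~\ref{ontrdeg} rather than the three clauses of Theorem~\ref{Tr}, but nothing in it is wrong.
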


\begin{enumerate}
\item\ If $\phi$ is true at granules $g$ and $h$, then $\phi$ is true at $g\cup h$ and at $g\cap h$.

\item\ If $\phi, \psi$ are true at a granule $g$, then $\phi\vee\psi, \phi\wedge\psi$ are true at $g$.

\item\ A formula $\phi\supset\psi$ is true at $g$ if either $\psi$ is true (then the implication is true for every $\phi$) or both $\phi,\psi$ are true at $g$.
 \end{enumerate}

Concerning a weakening of Thm.\ref{Tr}, we have

 \begin{definition} [Partial truth]\label{Trudeg} For any pair, $g,\nu$, a formula $\phi$ is true at $g$ to a degree of at least $r$ if and only if $\Im^{\vee}(\phi)(g)\geq r$. A formula $\phi$ is false if  $I_{\nu}^{\vee}(g)(\phi)\geq r$ implies $r=0$.\end{definition}

 Graded truth satisfies the following. The symbol $\Rightarrow_L$ denotes the \L ukasiewicz many-valued implication, see \cite{Polk4}.

 \begin{theorem}\label{ontrdeg}\ The following facts hold.\end{theorem}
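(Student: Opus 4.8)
The plan is to translate each graded statement into an inequality between cardinalities of the ``error sets'' $g\setminus[\phi]$, and then to read off the bounds from the arithmetic of the \L ukasiewicz generator $h(x)=1-x$ and its t-norm $L$. First, unfolding Definition~\ref{Trudeg}, the definition of the extension $\Im^{\vee}$ from Section~\ref{inten}, and Definition~\ref{rint}: for the inclusion $\nu_L$, \emph{``$\phi$ is true at $g$ to a degree of at least $r$''} means $h\bigl(\tfrac{|g\setminus[\phi]|}{|g|}\bigr)\ge r$, i.e. $\tfrac{|g\setminus[\phi]|}{|g|}\le 1-r$, so that the exact degree of $\phi$ at $g$ equals $1-\tfrac{|g\setminus[\phi]|}{|g|}$; for $\nu_3$ the degree is $1$, $\tfrac12$ or $0$ according as $g\subseteq[\phi]$, or $\emptyset\ne g\cap[\phi]$ and $g\not\subseteq[\phi]$, or $g\cap[\phi]=\emptyset$, so every $\nu_3$-clause reduces to a short case check which I would dispatch first. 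I would also reuse the standing identities $[\phi\wedge\psi]=[\phi]\cap[\psi]$, $[\phi\vee\psi]=[\phi]\cup[\psi]$, $[\neg\phi]=U\setminus[\phi]$, $[\phi\supset\psi]=(U\setminus[\phi])\cup[\psi]$, which already drive Theorems~\ref{it} and~\ref{Tr}.

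For the connectives the computation is short. From $g\setminus[\phi\wedge\psi]=(g\setminus[\phi])\cup(g\setminus[\psi])$ and subadditivity of cardinality, $|g\setminus[\phi\wedge\psi]|\le|g\setminus[\phi]|+|g\setminus[\psi]|$; dividing by $|g|$ and applying $h$ gives that the degree of $\phi\wedge\psi$ at $g$ is at least $\bigl(1-\tfrac{|g\setminus[\phi]|}{|g|}\bigr)+\bigl(1-\tfrac{|g\setminus[\psi]|}{|g|}\bigr)-1\ge r+s-1$, hence at least $L(r,s)$ since degrees lie in $[0,1]$. Dually, $g\setminus[\phi\vee\psi]=(g\setminus[\phi])\cap(g\setminus[\psi])$ has at most $\min\{|g\setminus[\phi]|,|g\setminus[\psi]|\}$ elements, so $\phi\vee\psi$ is true at $g$ to degree at least $\max\{r,s\}$; this is the graded refinement of clauses~1--2 of Theorem~\ref{Tr}. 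For the implication I would run the same argument: since $[\phi]\cap[\phi\supset\psi]=[\phi]\cap[\psi]\subseteq[\psi]$ one has $g\setminus[\psi]\subseteq(g\setminus[\phi])\cup(g\setminus[\phi\supset\psi])$, which yields a graded modus ponens (if $\phi$ is true at $g$ to degree $\ge r$ and $\phi\supset\psi$ to degree $\ge s$, then $\psi$ is true at $g$ to degree $\ge L(r,s)$); and from $U\setminus[\phi]\subseteq[\phi\supset\psi]$ together with $[\psi]\subseteq[\phi\supset\psi]$ one gets $g\setminus[\phi\supset\psi]\subseteq (g\cap[\phi])\cap(g\setminus[\psi])$, hence $\phi\supset\psi$ is true at $g$ to a degree at least that of $\psi$ at $g$, refining clause~3 of Theorem~\ref{Tr}.

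Where a clause is phrased through $\Rightarrow_L$ rather than $L$, I would pass between the two via the residuation adjunction $L(a,b)\le c\iff b\le(a\Rightarrow_L c)$ satisfied by this \L ukasiewicz pair, so that the $L$-bounds above re-express in $\Rightarrow_L$-form; and where truth is transported along a chain of granules, or a rule is iterated, I would feed in the transitivity and symmetry of $T$-induced rough subsets from Theorem~\ref{props}, and, in the concrete information-system model, the closed forms of Theorem~\ref{arch}. The main obstacle is precisely the implication connective: unlike $\wedge$ and $\vee$, the set $g\setminus[\phi\supset\psi]=(g\cap[\phi])\setminus[\psi]$ is not controlled by $|g\setminus[\phi]|$ and $|g\setminus[\psi]|$ alone --- a few ``$\psi$-good'' points of $g$ may all fall outside $[\phi]$ --- so the tempting bound $\Im^{\vee}(\phi\supset\psi)(g)\ge r\Rightarrow_L s$ fails in general, and the statement must be taken in the robust form above (monotonicity in the consequent plus graded modus ponens); I would state and use it in exactly that shape. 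The $\nu_3$-clauses, by contrast, follow immediately from the case analysis prepared in the first paragraph.
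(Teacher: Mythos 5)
The paper states Theorem~\ref{ontrdeg} without any proof, so there is nothing of the author's to compare yours against; judged on its own, your framework is the right one and does the real work, but it is partly aimed at the wrong target. Your opening reduction --- under $\nu_L$ the exact degree of $\phi$ at $g$ is $1-\frac{|g\setminus[\phi]|}{|g|}$, under $\nu_3$ a three-way case split --- immediately yields clauses 1--4 (e.g.\ $|g\setminus[\neg\phi]|=|g|-|g\setminus[\phi]|$ gives $\Im^{\vee}(g)(\neg\phi)=1-\Im^{\vee}(g)(\phi)$, which is exactly clauses 1 and 2, and $g\setminus[\phi\supset\psi]=(g\cap[\phi])\setminus[\psi]$ gives clauses 3 and 4), yet you never write out these one-line verifications, while devoting most of the proof to graded bounds for $\wedge$ and $\vee$ that appear nowhere in this theorem (they refine Theorem~\ref{Tr}, not Theorem~\ref{ontrdeg}). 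For the one substantive clause, (5), you do get it right: your graded modus ponens $\Im^{\vee}(g)(\psi)\ge L\bigl(\Im^{\vee}(g)(\phi),\Im^{\vee}(g)(\phi\supset\psi)\bigr)$, proved from $g\setminus[\psi]\subseteq(g\setminus[\phi])\cup(g\setminus[\phi\supset\psi])$, is by the residuation adjunction you cite exactly the upper bound $r\le s\Rightarrow_L t$ claimed there, and your observation that the corresponding lower bound fails in general is correct and worth keeping. One caveat you should make explicit: clause 5 as printed quantifies over arbitrary lower bounds $s,t$, under which literal reading it is false (take $t=0$); it must be read with $s,t,r$ the exact degrees, as the paper's subsequent rewriting $\Im^{\vee}(g)(\phi\supset\psi)\le[\phi^*]_g\Rightarrow[\psi^*]_g$ confirms --- your residuated formulation silently adopts this correct reading, but you should say so.
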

 \begin{enumerate}
 \item\ A formula $\phi$ is true at $g,\nu$ if and only if $\neg\phi$ is false at $g,\nu$.\
 \item\ $\Im^{\vee}(g)(\neg\phi)\geq r$ if and only if $\Im ^{\vee}(g)(\phi)\geq s \supset s\leq 1-r$.
 \item\ A formula $\phi\supset \psi$ is true at a granule $g$ for $\nu=\nu_3,\nu_L$ if and only if $g\cap [\phi]\subseteq [\psi]$.
\item\ A formula $\phi\supset\psi$ is false at a granule $g$ for $\nu=\nu_3,\nu_L$ if and only if $g\subseteq [\phi]\setminus
 [\psi]$.\
 \item\ for $\nu=\nu_L$, if $\Im^{\vee}(g)(\phi)\geq s$ and $\Im^{\vee}(g)(\psi)\geq t$, then if   $\Im^{\vee}(g)(\phi\supset\psi)\geq r$, then $r\leq s\Rightarrow t$.
     \end{enumerate}

 The Le\'{s}niewski collapse (called also the Le\'{s}niewski erase operation) is the operation of removing from formulae  variables, producing propositional form of the formula. For a formula $\phi$, we denote by the symbol $\phi^*$ the result of the Le\'{s}niewski collapse. Then, (i) $(\neg\phi)^*=\neg (\phi^*)$ (ii)$(\phi\Rightarrow\psi)^*=\phi^*\Rightarrow \psi^*$. When we let the value $[\phi^*]$ of a formula $\phi^*$ at a granule $g$ as $[\phi^*]_g=argmax_r\{\nu_L(g,[\phi^*],r)\}$, then we can write (5) in Thm.\ref{ontrdeg} as $\Im^{\vee}(g)(\phi\supset \psi)\leq [\phi^*]_g\Rightarrow [\psi^*]_g$.\

 \begin{corollary}\label{cortrdeg}\ (i)\ If the decision rule $\phi\supset\psi$ is true at a granule $g$, then the collapse $\phi^*\supset \psi^*$ is true at the granule $g$ (ii)\ if $\phi^*\supset \psi^* <1$ at the granule $g$, then the decision rule $\phi\supset \psi$ is not true at $g$ (iii) a decision rule $\phi\supset\psi$ is true at a granule $g$ if and only if $[\phi^*]_g\leq [\psi^*]_g$.\end{corollary}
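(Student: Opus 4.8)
The plan is to derive each of the three parts of Corollary~\ref{cortrdeg} directly from item (5) of Theorem~\ref{ontrdeg} together with the reformulation stated just before the corollary, namely $\Im^{\vee}(g)(\phi\supset\psi)\leq [\phi^*]_g\Rightarrow_L[\psi^*]_g$, and from Theorem~\ref{it} which identifies truth at a granule with the inclusion $g\cap[\alpha]\subseteq[\beta]$ for a decision rule $\alpha\rightarrow\beta$. First I would record the elementary facts about the \L ukasiewicz implication that $a\Rightarrow_L b = \min\{1, 1-a+b\}$, so that $a\Rightarrow_L b = 1$ if and only if $a\leq b$, and $a\Rightarrow_L b < 1$ if and only if $a>b$. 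These are the only arithmetic ingredients needed.

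For part (iii), which is really the hinge, I would argue as follows. If the decision rule $\phi\supset\psi$ is true at $g$, then $\Im^{\vee}(g)(\phi\supset\psi)=1$, so the displayed inequality gives $1\leq [\phi^*]_g\Rightarrow_L[\psi^*]_g$, forcing $[\phi^*]_g\Rightarrow_L[\psi^*]_g=1$, i.e. $[\phi^*]_g\leq[\psi^*]_g$. Conversely, if $[\phi^*]_g\leq[\psi^*]_g$ then $[\phi^*]_g\Rightarrow_L[\psi^*]_g=1$; here one needs the reverse direction, that this value being $1$ actually forces the rule to be true at $g$. This is where the collapse compatibility laws $(\neg\phi)^*=\neg(\phi^*)$ and $(\phi\Rightarrow\psi)^*=\phi^*\Rightarrow\psi^*$ enter: the collapse sends the decision rule to the propositional implication $\phi^*\supset\psi^*$, whose value at $g$ is $[\phi^*]_g\Rightarrow_L[\psi^*]_g=1$, and by Definition~\ref{Trudeg} a formula with value $1$ at $g$ is true at $g$; then unwinding through Theorem~\ref{it} for $\nu=\nu_3$ or $\nu_L$ and item (3) of Theorem~\ref{ontrdeg} yields $g\cap[\phi]\subseteq[\psi]$, which is truth of the rule at $g$. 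I would spell out that the equivalence in (iii) is thus a restatement of item (5) in the boundary case where all inequalities are tight.

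Parts (i) and (ii) then follow immediately and are essentially contrapositives of one another. For (i): if $\phi\supset\psi$ is true at $g$ then by (iii) $[\phi^*]_g\leq[\psi^*]_g$, hence $[\phi^*]_g\Rightarrow_L[\psi^*]_g=1$, hence by the collapse laws and Definition~\ref{Trudeg} the collapsed rule $\phi^*\supset\psi^*$ has value $1$ at $g$ and so is true there. For (ii): if $\phi^*\supset\psi^*<1$ at $g$, i.e. $[\phi^*]_g\Rightarrow_L[\psi^*]_g<1$, then by the \L ukasiewicz arithmetic $[\phi^*]_g>[\psi^*]_g$, so by the equivalence (iii) the rule $\phi\supset\psi$ cannot be true at $g$; this is just the contrapositive of (i) combined with (iii).

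The main obstacle I anticipate is not computational but interpretive: the paper's notation conflates the value $[\phi^*]_g$ (an element of $[0,1]$), the graded-truth statement $\Im^{\vee}(g)(\phi^*)\geq r$, and plain truth, and one must be careful that "value $1$ at $g$" genuinely coincides with "true at $g$" for the collapsed propositional formulas under $\nu_L$ — that is, that the $\mathrm{argmax}$ in the definition $[\phi^*]_g=\mathrm{argmax}_r\{\nu_L(g,[\phi^*],r)\}$ is attained and equals $1$ exactly when $g\subseteq[\phi^*]$. Once that identification is made explicit (it follows from $h$ being a decreasing bijection with $h(0)=1$ in the \L ukasiewicz case, so $\nu_L(g,[\phi^*],1)$ holds iff $|g\setminus[\phi^*]|=0$), all three parts reduce to the two lines of \L ukasiewicz arithmetic above, and the corollary is proved.
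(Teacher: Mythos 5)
The paper states this corollary without proof, so there is no official argument to compare against; judged on its own terms, your treatment of (i), of (ii) as its contrapositive, and of the ``only if'' half of (iii) is sound: truth of the rule at $g$ gives $g\cap[\phi]\subseteq g\cap[\psi]$, hence $[\phi^*]_g=\frac{|g\cap[\phi]|}{|g|}\leq\frac{|g\cap[\psi]|}{|g|}=[\psi^*]_g$, and the \L ukasiewicz arithmetic $a\Rightarrow_L b=1\Leftrightarrow a\leq b$ does the rest.

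The genuine gap is in the ``if'' direction of (iii), and you half-see it but then paper over it. Item (5) of Thm.~\ref{ontrdeg} only bounds $\Im^{\vee}(g)(\phi\supset\psi)$ \emph{from above} by $[\phi^*]_g\Rightarrow_L[\psi^*]_g$, so learning that this upper bound equals $1$ tells you nothing about whether the rule is actually true at $g$. Your repair --- that the collapsed implication $\phi^*\supset\psi^*$ has value $1$ at $g$, is therefore true there, and that this truth ``unwinds'' via Thm.~\ref{it} and item (3) of Thm.~\ref{ontrdeg} to $g\cap[\phi]\subseteq[\psi]$ --- is precisely the unproven converse of part (i), and nothing in the paper licenses it. Indeed, under the paper's own semantics it is false: take $g=\{a,b\}$ with $g\cap[\phi]=\{a\}$ and $g\cap[\psi]=\{b\}$; then $[\phi^*]_g=[\psi^*]_g=\tfrac{1}{2}$, so $[\phi^*]_g\leq[\psi^*]_g$ and $[\phi^*]_g\Rightarrow_L[\psi^*]_g=1$, yet $g\cap[\phi]\not\subseteq[\psi]$ and by Thm.~\ref{it} the rule is not true at $g$. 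So the ``if'' half of (iii) cannot be derived from the cardinality-based values $[\cdot]_g$ alone; a correct write-up must either add a hypothesis forcing $g\cap[\phi]\subseteq g\cap[\psi]$ (e.g.\ nestedness of the traces of $[\phi]$ and $[\psi]$ on $g$) or reinterpret ``true at $g$'' for the rule as the truth-functional condition $[\phi^*]_g\Rightarrow_L[\psi^*]_g=1$, in which case (iii) is a tautology rather than a theorem. As written, your argument for that direction does not go through.
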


\section{Synthesis of compound entities}\label{Cognet}

Cognitive networks are graph structures with nodes possessing computational abilities, examples are neural networks, nature inspired computing machines, see \cite{Kriz}.\ We first discuss networks for processing compound objects.\ We define a compound object as an entity which consists of at least two distinct parts.

 To begin with, we consider  a decision system $\mathcal{D}$ =$(U, F, V, d, V_d)$. We recall the discernibility sets $DIS_F(x,y)=\{f\in A: f(x)\neq f(y)\}$. We define  a new kind of a rough subset.\

 \begin{definition}[Exp-rough subsets]\label{exp} $w_f$ is the weight associated with $f\in F$.\end{definition} $rsubst_{exp}(x,y,r) \equiv exp[-|\sum_{f\in DIS{_F}(x,y)}w_f|^2]\geq r.$

 \begin{theorem} [Exponential  rough subsets are rough subsets]\label{exp1} The relation $rsubst_{exp}$ satisfies  conditions for a rough subset.\end{theorem}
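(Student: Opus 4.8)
The plan is to recognize $rsubst_{exp}$ as the $h$-form of an Archimedean rough subset in the sense of Theorem~\ref{arch}, with the normalized Hamming distance replaced by a weighted discernibility mass, and then to read the required axioms off this description. First I would set $d(x,y)=\sum_{f\in DIS_F(x,y)}w_f$; since every $w_f>0$ we have $d(x,y)\geq 0$ and $|d(x,y)|^2=d(x,y)^2$, so the defining condition reads $rsubst_{exp}(x,y,r)\equiv h(d(x,y))\geq r$ with $h(t)=\exp(-t^2)$. This is exactly the shape of Theorems~\ref{t}(ii) and~\ref{arch}, the only difference being that the structural parameter of the pair $(x,y)$ is now the weighted mass $d(x,y)$ instead of $|Dis(x,y)|/|F|$.

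Second, I would check that $h$ is a legitimate building block for a Hilbert-style decomposition $(res)$. The map $h\colon[0,\infty)\to(0,1]$ is continuous, strictly decreasing, with $h(0)=1$, and its inverse $g(s)=\sqrt{-\ln s}$ is continuous, strictly decreasing on $(0,1]$, with $g(1)=0$ and $g(s)\to\infty$ as $s\to 0^{+}$; hence $g$ is an additive generator of the strict Archimedean t-norm
$$T_{exp}(a,b)=h\bigl(g(a)+g(b)\bigr)=\exp\bigl(-(\sqrt{-\ln a}+\sqrt{-\ln b})^{2}\bigr),$$
and $h(g(a)+g(b))$ is precisely a decomposition of the form $(res)$. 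Commutativity, associativity and the unit law $T_{exp}(a,1)=a$ follow in the standard way from $g\circ h=\mathrm{id}$ on $[0,\infty)$. With this, $h$ fits the t-norm scheme and the bulk of the verification is inherited from Theorem~\ref{t}. Since $h$ is defined on all of $[0,\infty)$, no normalization of the weights $w_f$ is needed here, unlike in the \L ukasiewicz instance $h(t)=1-t$.

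Third, to keep the statement self-contained I would verify the structural axioms directly. Reflexivity at degree $1$: $DIS_F(x,x)=\emptyset$ gives $d(x,x)=0$, so $h(d(x,x))=1\geq r$ for all $r$, in particular $rsubst_{exp}(x,x,1)$. Monotonicity in the third argument is immediate. For the monotonicity/ingredient axiom, $rsubst_{exp}(x,y,1)$ holds iff $d(x,y)=0$ iff $DIS_F(x,y)=\emptyset$, i.e. $x$ and $y$ are $F$-indiscernible; then $DIS_F(z,x)=DIS_F(z,y)$ and hence $d(z,x)=d(z,y)$ for every $z$, so $rsubst_{exp}(z,x,r)\equiv rsubst_{exp}(z,y,r)$. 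For transitivity and symmetry the point is that $d$ is a pseudometric: symmetry of $DIS_F$ gives $d(x,y)=d(y,x)$, and $DIS_F(x,z)\subseteq DIS_F(x,y)\cup DIS_F(y,z)$ together with $w_f>0$ gives $d(x,z)\leq d(x,y)+d(y,z)$. Since $h$ is decreasing and $\exp(-(a+b)^2)=T_{exp}(h(a),h(b))$ for $a,b\geq 0$, it follows that
$$h(d(x,z))\geq h\bigl(d(x,y)+d(y,z)\bigr)=T_{exp}\bigl(h(d(x,y)),h(d(y,z))\bigr)\geq T_{exp}(r,s)$$
whenever $rsubst_{exp}(x,y,r)$ and $rsubst_{exp}(y,z,s)$ hold, using monotonicity of $T_{exp}$; thus $rsubst_{exp}$ is $T_{exp}$-transitive in the sense of Theorem~\ref{props}, and symmetric because $d$ is. The only genuine obstacle is confirming that $h(t)=\exp(-t^2)$ indeed generates a t-norm — equivalently, that $g(s)=\sqrt{-\ln s}$ is a bona fide additive generator — and once that routine check is done the claim reduces to Theorems~\ref{t},~\ref{arch} and~\ref{props}.
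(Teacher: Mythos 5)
Your third paragraph is, in essence, the paper's entire proof: the paper verifies exactly the two conditions that $rsubst_{exp}(x,y,1)$ holds iff $DIS_F(x,y)=\emptyset$ iff $IND_F(x,y)$, which yields containment, and that $DIS_F(x,y)=\emptyset$ forces $DIS_F(x,z)=DIS_F(y,z)$ for all $z$, which yields the monotonicity/ingredient condition. So on the core claim you and the paper coincide. What you add — recognizing $h(t)=\exp(-t^2)$ as coming from the additive generator $g(s)=\sqrt{-\ln s}$ of a strict Archimedean t-norm $T_{exp}$, and deducing symmetry and $T_{exp}$-transitivity from the pseudometric $d(x,y)=\sum_{f\in DIS_F(x,y)}w_f$ — is correct but goes beyond what Theorem~\ref{exp1} asks; the paper defers those properties to Theorem~\ref{exp2}, where it proves transitivity by a direct square-root manipulation producing an explicit $\alpha(r,s)$. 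Your route buys something real there: $T_{exp}(r,s)=rs\,\exp\bigl(-2\sqrt{(\ln r)(\ln s)}\bigr)$ is the bound that actually falls out of the triangle inequality for $d$, and it is a genuine t-norm, whereas the paper's stated $\alpha(r,s)=rs\,\exp\{[2(\log r)(\log s)]^{1/2}\}$ does not match its own derivation (the exponent should be $-2[(\log r)(\log s)]^{1/2}$, not $[2(\log r)(\log s)]^{1/2}$). One small caveat: your framing leans on Theorem~\ref{t}(ii), whose decomposition $h(g(x)+g(y))$ is stated for $h,g$ on $[0,1]$, so strictly speaking you are extending that scheme to $[0,\infty)$; your direct verifications make this harmless, but it is worth flagging that you are not literally instantiating Theorem~\ref{arch}.
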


 \begin{proof} We check the conditions (rs1), (rs2). For (rs1), $rsubst_{exp}(x,y,1)\equiv DIS_F(x,y)=\emptyset\equiv IND_F(x,y)\supset Cmp(x,y)$. For (rs2), $DIS_F(x,y)=\emptyset$ implies that for each $z\in U$, $DIS_F(x,z)=DIS_F(y,z)$. \end{proof}
 We collect in the theorem below some properties of exponential  rough subsets.

 \begin{theorem}[Exponential rough subsets: properties]\label{exp2}\ The following can be proved about $rsubst_{exp}$.\end{theorem}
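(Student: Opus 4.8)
The theorem as printed leaves the list of properties implicit, so the plan is to prove the natural exponential counterparts of Theorems~\ref{rset}, \ref{props}, \ref{gran1} and \ref{granprop}, reducing everything to two structural facts about the auxiliary quantity $\delta(x,y)=\sum_{f\in DIS_F(x,y)}w_f$. \emph{Fact one:} $\delta$ is nonnegative (the feature weights $w_f$ are positive, in the spirit of (W1)), symmetric, vanishes exactly on $IND_F$, and obeys the triangle inequality $\delta(x,z)\le\delta(x,y)+\delta(y,z)$. \emph{Fact two:} the outer map $h(t)=\exp(-t^{2})$ is a continuous strictly decreasing bijection of $[0,\infty)$ onto $(0,1]$ with $h(0)=1$, so that $rsubst_{exp}(x,y,r)\equiv h(\delta(x,y))\ge r$ exhibits $rsubst_{exp}$ as an instance of the Archimedean-type decomposition $(res)$ with generator $g=h^{-1}$, i.e. $g(r)=\sqrt{-\ln r}$.

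First I would record symmetry and monotonicity: $f(x)\neq f(y)$ iff $f(y)\neq f(x)$, hence $DIS_F(x,y)=DIS_F(y,x)$, hence $\delta(x,y)=\delta(y,x)$ and $rsubst_{exp}(x,y,r)\equiv rsubst_{exp}(y,x,r)$; monotonicity in $r$ is immediate from the defining inequality. Then the characterization of full containment: $rs^{*}_{exp}(x,y)=h(\delta(x,y))=1$ iff $\delta(x,y)=0$ iff (positivity of the $w_f$) $DIS_F(x,y)=\emptyset$ iff $IND_F(x,y)$, which by the argument already used in the proof of Theorem~\ref{exp1} is equivalent to $Cmp(x,y)$, i.e. to $subst(x,y)$. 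Combining this with Theorem~\ref{mimp} gives the chain $subst(x,y)\equiv rs^{*}_{exp}(x,y)=1\equiv (x\rightarrow y=V)$, the analogue of Theorem~\ref{rset}(1)--(2).

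The one step with genuine content is graded transitivity. I would first prove the inclusion $DIS_F(x,z)\subseteq DIS_F(x,y)\cup DIS_F(y,z)$ --- if $f(x)=f(y)$ and $f(y)=f(z)$ then $f(x)=f(z)$ --- and then, using $w_f\ge 0$, deduce $\delta(x,z)\le\sum_{f\in DIS_F(x,y)\cup DIS_F(y,z)}w_f\le\delta(x,y)+\delta(y,z)$. If $rsubst_{exp}(x,y,r)$ and $rsubst_{exp}(y,z,s)$ hold, then $\delta(x,y)\le g(r)$ and $\delta(y,z)\le g(s)$, so $\delta(x,z)\le g(r)+g(s)$, and applying the decreasing map $h$ yields $h(\delta(x,z))\ge h(g(r)+g(s))=:T_{exp}(r,s)$, i.e. $rsubst_{exp}(x,z,T_{exp}(r,s))$. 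Since $g$ is a continuous strictly decreasing bijection $(0,1]\to[0,\infty)$ with $g(1)=0$, the function $T_{exp}(r,s)=\exp[-(\sqrt{-\ln r}+\sqrt{-\ln s})^{2}]$ is an Archimedean (indeed strict) t-norm with additive generator $g$, so $rsubst_{exp}$ is a transitive and symmetric rough subset in the sense of Theorem~\ref{props}; the remaining items --- that each granule $g_{exp}(x,r)=Cls(\{y:rsubst_{exp}(y,x,r)\})$ coincides with the plain set $\{y:rsubst_{exp}(y,x,r)\}$ (Theorem~\ref{gran1}), that $s<r$ implies $Cmp(g_{exp}(x,r),g_{exp}(x,s))$, and the composition law $Cmp(y,g_{exp}(x,r))\supset Cmp(g_{exp}(y,s),g_{exp}(x,T_{exp}(r,s)))$ --- follow verbatim from Theorems~\ref{gran1} and \ref{granprop}.

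The main obstacle is really the hypothesis behind the triangle inequality: the inclusion $DIS_F(x,z)\subseteq DIS_F(x,y)\cup DIS_F(y,z)$ converts into $\delta(x,z)\le\delta(x,y)+\delta(y,z)$ only because the summands are nonnegative, and it is exactly here that one must assume the feature weights are $\ge 0$ (strictly positive, if one also wants $rs^{*}_{exp}=1$ to force indiscernibility). If signed weights are permitted the clean t-norm law degrades: one can still bound $|\delta(x,z)|\le\sum_{f\in DIS_F(x,y)}|w_f|+\sum_{f\in DIS_F(y,z)}|w_f|$ and recover transitivity with respect to the coarser t-norm built from $h$ and the total-variation weight $\sum_f|w_f|$, and I would state the theorem under the explicit nonnegativity assumption. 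Everything else is an unwinding of definitions together with the monotonicity of $h$, so no further difficulty is expected.
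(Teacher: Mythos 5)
Your argument is correct and follows essentially the same route as the paper's: property (1) is the trivial direction of Theorem~\ref{exp1}, and graded transitivity comes from the inclusion $DIS_F(x,z)\subseteq DIS_F(x,y)\cup DIS_F(y,z)$, the resulting triangle inequality for $\delta(x,y)=\sum_{f\in DIS_F(x,y)}w_f$, and the monotonicity of $h(t)=e^{-t^2}$, which is exactly how the paper obtains $\alpha(r,s)$. Two points in your favour are worth recording: your explicit $\alpha(r,s)=h\bigl(g(r)+g(s)\bigr)=r\,s\,e^{-2[(\log r)(\log s)]^{1/2}}$ is the corrected form of the paper's $\alpha(r,s)=r\,s\,\exp\{[2(\log r)(\log s)]^{1/2}\}$, whose exponent as printed is nonnegative and therefore gives $\alpha>rs$, which the derivation cannot guarantee (the composed degree can only be $\le rs$); and your insistence that the weights satisfy $w_f\ge 0$ makes explicit the hypothesis the paper uses tacitly when it passes from $\exp[-|\delta(x,y)|^2]\ge r$ to $\delta(x,y)\le(-\log r)^{1/2}$ and then adds the two bounds.
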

\begin{enumerate}
\item\ $IND_F(x,y)\supset rsubst_{exp}(x,y,1)$.
\item\  $\exists \alpha(r, s). rsubst_{exp}(x,y,r)\wedge rsubst_{exp}(y,z,s)\supset rsubst_{exp}(x,z,\alpha(r,s))$.
\end{enumerate}
 \begin{proof} In view of Thm.\ref{exp1}, property 1 holds true. For property 2, we let $$\alpha(r,s)=r\cdot s\cdot exp\{[2\cdot(log r)\cdot (log s)]^{\frac{1}{2}}\}$$
 Suppose that $rsubst_{exp}(x,y,r)$ and $rsubst_{exp}(y,z,s)$ hold true. Hence,

 $\sum_{f\in DIS_F(x,y)}w_f\leq (-log r)^{\frac{1}{2}}$ and $\sum_{f\in DIS_F(y,z)}w_f\leq (-log s)^{\frac{1}{2}}$.  As

  $DIS_F(x,z)\subseteq DIS_F(x,y)\cup DIS_F(y,z)$, we have $(-log t)^{\frac{1}{2}}\leq (-log r)^{\frac{1}{2}} +  (-log s)^{\frac{1}{2}},$ where $t=sup\{q: rsubst_{exp}(x,z,q)\}$ and we let $t=\alpha(r,s)$. \end{proof}
 From Thm.\ref{exp2}, we obtain a corollary for granules. We denote with the symbol $g_{exp}(x,r)$ any granule defined by means of the exponential rough subset.

 \begin{theorem}[Exponential granules] \label{expgran}  $$x\in g_{exp}(y, r) \wedge y\in g_{exp}(z,s)\supset \exists t. Cmp(g_{exp}(x,t), g_{exp}(y,r))\wedge$$  $$Cmp(g_{exp}(x,t), g_{exp}(z,s))$$ \end{theorem}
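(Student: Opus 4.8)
The plan is to derive Theorem~\ref{expgran} directly from Theorem~\ref{exp2}(2) together with the granule properties already established (Thm.~\ref{gran1} and Thm.~\ref{granprop}). First I would unpack the hypotheses: $x\in g_{exp}(y,r)$ means $rsubst_{exp}(x,y,r)$ holds (using Thm.~\ref{gran1}, so that membership in the granule is literally the rough-subset relation, and using symmetry of $rsubst_{exp}$ noted implicitly via the information-system form), and likewise $y\in g_{exp}(z,s)$ means $rsubst_{exp}(y,z,s)$. By the chaining property Thm.~\ref{exp2}(2) there is a value $\alpha(r,s)$ with $rsubst_{exp}(x,z,\alpha(r,s))$; concretely one may take the explicit $\alpha(r,s)=r\cdot s\cdot\exp\{[2(\log r)(\log s)]^{1/2}\}$ exhibited in that proof.

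Next I would produce the candidate radius $t$. The natural choice is $t=\min\{r,\alpha(r,s)\}$, or more transparently just $t=\alpha(r,s)$ after noting $\alpha(r,s)\le r$ and $\alpha(r,s)\le s$ for $r,s\in(0,1]$ (which follows because the extra exponential factor is $\le 1$ when $\log r,\log s\le 0$). With this $t$ we have $rsubst_{exp}(x,w,t)$ for $w\in\{y,z\}$ by monotonicity of the rough-subset relation in its radius argument. Then I would invoke Thm.~\ref{granprop}(3), which says $rsubst_{exp}(u,v,q)\supset Cmp(u,g_{exp}(v,q))$; applying it to the pair $(x,y)$ at radius $t$ gives $Cmp(g_{exp}(x,t),g_{exp}(y,r))$ once we also use Thm.~\ref{granprop}(4)/(5) to absorb the radius $t$ back up to $r$. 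Symmetrically, applying the chaining-for-granules property Thm.~\ref{granprop}(5) (the $T$-induced transitive case, here playing the role for $rsubst_{exp}$ via its $\alpha$) to $x\in g_{exp}(y,r)$ yields $Cmp(g_{exp}(x,t),g_{exp}(y,r))$, and composing this with the containment $Cmp(g_{exp}(y,r),g_{exp}(z,s'))$ coming from $y\in g_{exp}(z,s)$ gives the second conjunct $Cmp(g_{exp}(x,t),g_{exp}(z,s))$ by transitivity of $Cmp$ (Thm.~\ref{subset 1}(B3) lifted to granules via Thm.~\ref{granprop}(2)).

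The main obstacle I anticipate is bookkeeping the radii so that the two required containments come out with exactly the radii $r$ and $s$ appearing in the statement, rather than with $\alpha(r,s)$ or $\alpha(r,\alpha(r,s))$. Since $g_{exp}(x,q)$ grows as $q$ decreases (Thm.~\ref{granprop}(4): $q'<q\supset Cmp(g_{exp}(x,q),g_{exp}(x,q'))$), any containment proved at a smaller radius transfers to the larger granule at the larger radius, so the inequality $\alpha(r,s)\le\min\{r,s\}$ is the one technical fact that has to be verified; once it is in hand, the rest is routine monotonicity and transitivity of the complement relation. I would also make explicit that $g_{exp}$ denotes granules of an exponential rough subset in an information system, so that symmetry (and hence interchangeability of the two arguments of $rsubst_{exp}$) is available, since that is used when passing from $x\in g_{exp}(y,r)$ to $rsubst_{exp}(x,y,r)$ and back.
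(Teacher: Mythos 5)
Your choice of $t$ goes in the wrong direction, and this is a genuine gap rather than a bookkeeping issue. By Thm.~\ref{granprop}(4), a \emph{smaller} radius gives a \emph{larger} granule: $s<r$ implies $Cmp(g_w(x,r),g_w(x,s))$. The theorem asks for $Cmp(g_{exp}(x,t),g_{exp}(y,r))$, i.e.\ the granule about $x$ must sit \emph{inside} $g_{exp}(y,r)$, so $t$ must be taken \emph{large}, not small. Your $t=\alpha(r,s)\le\min\{r,s\}$ makes $g_{exp}(x,t)$ in general strictly larger than $g_{exp}(x,r)$, and there is no reason for such a granule to be contained in $g_{exp}(y,r)$ (indeed $g_{exp}(x,t)$ then contains points whose ``distance'' from $x$ nearly exhausts the budget $r$ already, leaving nothing for the step from $x$ to $y$). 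The error surfaces at the phrase ``absorb the radius $t$ back up to $r$'': Thm.~\ref{granprop}(5) applied to $Cmp(x,g_{exp}(y,r))$ yields $Cmp(g_{exp}(x,t),g_{exp}(y,T(r,t)))$ with $T(r,t)\le\min\{r,t\}\le r$, and Thm.~\ref{granprop}(4) only allows you to \emph{decrease} the target radius further, never to raise it back to $r$. Note also that Thm.~\ref{granprop}(3) applied to $rsubst_{exp}(x,y,t)$ gives only the point membership $Cmp(x,g_{exp}(y,t))$, not the granule containment you assert.

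The paper's own (very terse) proof runs the reduction the other way: unwinding the definition of the two required containments via the transitivity law of Thm.~\ref{exp2}(2), they become conditions on the unknown $t$ of the form $\alpha(r,t)\ge r$ and $\alpha(s,t)\ge s$, and $t$ is obtained by solving these inequalities. Since for the exponential rough subset one has $\alpha(q,t)\le q$ with equality only at $t=1$, these conditions push $t$ to the top of the unit interval, the opposite end from your $\alpha(r,s)$. If you restate your argument with $\alpha(r,t)\ge r$ and $\alpha(\alpha(r,s),t)\ge s$ as the targets, the remaining ingredients of your outline (symmetry, the chaining property of Thm.~\ref{exp2}(2), and radius monotonicity) are the right ones.
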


 \begin{proof} From the proof of Thm.\ref{exp2} it follows that we have to satisfy the conditions $\alpha(r,t)\geq r$ and $\alpha(s,t)\geq s$ and from these conditions we obtain $t$.\end{proof}
 We can now extend our analysis by considering a model for rough mereological perceptron. We consider an agent $a$ with an information system $(U_a, F_a, V_a)$ which accepts a vector $\overline{x}=[x_1, x_2, \ldots, x_k]$ of entities and applies its {\em fusion operator} $fus_a$ in order to form an output object $\overline{o}=fus_a(\overline{x})$.\

 The agent $a$ is assigned  a set $T_a\subseteq U_a$  of {\em target objects}, $T_a=\{t_i: i\in I\}$. The classification of the output $\overline{o}$ to a target object $t^*$ proceeds according to the rule: $(sel)\ rs_{exp}^*(\overline{o}, t^*)= max\{rs_{exp}^*(\overline{o}, t_i): i\in I\}$.\

The perceptron structure and semantics can be extended to networks of perceptrons $NP$. The network consists of a few layers:  $lay_0$ (input), $lay_1, \ldots,$ $lay_{k-1}, lay_k$ (output). Each layer $lay_i$ consists of agents $a_1^i, a_2^i,  \ldots,\- a_{k_{i}}^i$. Each agent $a_j^i$ has its information system $(U_j^i, F_j^i, V_j^i)$ and a set of targets $T_j^i\subseteq U_j^i$. Entities in each set $U_j^i$ are described in terms of features in feature sets $F_j^i$ and their values in the set $V_j^i$.\

For each agent $a_j^i\in lay_{i}$, the operation $O(a_j^i, b_m^{i+1})$ sends entities at $U_{a_{j}}^i$ into entities at $U_{b_{m}}^{i+1}$ for each agent $b_{m}^{i+1} \in lay_{i+1}$ for $i=0,1,\ldots, k-1$.\

The inner structure of the network $NP$ is designed by {\em designer} $D$. We simplify this design in order to show the essentials. The {\em designer's operator} $D$ secures the coordination among agents in consecutive layers, and among their targets.\

Let $Sel_i$ be the set of all selectors of the form $<x_1^i,x_2^i,\ldots, x_{k_{i}}^i>$ where $x_j^i \in U_j^i$.\ In particular, the subset $Sel_T^i$ of $Sel_i$ is the set of all selectors in which entities are exclusively targets.\

For each selector $\sigma = <x_1^i,x_2^i,\ldots, x_{k_{i}}^i> \in Sel_i$ and each agent $b_m^{i+1}$, the entity constructed at $U_m^{i+1}$ is the vector $$O(\sigma) = fus_{b_{m}}^{i+1}([O(a_1^i, b_m^{i+1})(x_1^i), \ldots, O(a_{k_{i}}^i, b_m^{i+1})(x_{k_{i}})])$$
The requirement of coordination runs as follows.
\begin{definition} [Coordination of the network]\label{coord} The following  are to be observed.\end{definition}
\begin{enumerate}
\item\ For each agent $b_m^{i+1}\in lay_{i+1}$, the feature set $F_m^{i+1}$ is the union of feature sets $F_1^i, F_2^i,\ldots, F_{k_{i}}^i$. i.e., we assume the 'coordinate - wise' assembling of compound entities. We assume that sets $F_j^i$ are pair-wise disjoint.
\item\ For each $x\in U_m^{i+1}$ there exists a selector $\sigma\in Sel_i$ such that $x= O(\sigma)$; in particular, for each target $t_m^{i+1}$, there exists a selector $\sigma_T  \in Sel_T^i$ such that $t = O(\sigma_T)$.
\item\ For each selector $\sigma_T$, the entity $O(\sigma_T)$ is a target.
\item\ If $r_i =sup\{r: rsubst_{exp}(x_j^i,t_j^i,r)\}$ for $x_i, t_i \in U_j^i$, any $i,j$,  $\overline{x}=[x_1^i, x_2^i,  \ldots,$
$x_{k{i}}^i]$, and, $t=O(<t_1^i, t_2^i,  \ldots, t_{k{i}}^i>)$, then  $rsubst_{exp}(O(\overline{x}), t,r^*)$ with $r^*\geq max\{r_i\}$.
\end{enumerate}
The condition (4) is feasible in the light of our assumption about 'coordinate - wise' structure of the network in condition (1).\

The upshot of designer's requirements is that compound entities at the next layer are described by means of features of their parts sent by agents in the preceding layer.\ From condition (4), we extract a definition of the semantic functor on degrees of containment.
\begin{definition} [Semantic functor]\label{sem} In the notation of condition (4) in Def.\ref{coord}, we define the functor $\Psi$   by letting $\Psi(\overline{r})=r^*$, where $\overline{r}=[r_1, r_2,  \ldots, r_{k{i}}]$. Also by (4) in Def. \ref{coord}, $r^*\geq r_i$ for each $r_i$.\end{definition}
Given the set of targets $T_{output}^k$, and an input $\overline{x}$, we find for each component $x_i$ of the input vector, the closest target $t^0_i \in T_i^0$ with $rsubst_{exp}(x_i, t_i^0, r_i)$, i.e., $x_i \in g_{exp}(t_i^0, r_i)$. By condition (4) in Def.\ref{coord} and by means of the semantic operator $\Psi$ of Def.\ref{sem}, we find $t_j^{1,*} \in T_j^1$ such that $rsubst_{exp}(O(\overline{x}_j, t_j^1, r_j^{1,*})$ holds true with $r_j^{1, *} \geq sup\{r_i\}$.

Iterating this step, we find the sequence of sets of targets , the sequence of sets of entities initiated by $\overline{x}$, sequences of sets of increasing degrees $\{r_1^{i,*}, r_2^{i,*}, \ldots,$

\noindent $r_{n{i}}^{i,*}\}$, and finally the entity $y \in U_{output}^k$ and a target $t_k^{j,*}$ with the degree $r_{output}^*$, the largest of all degrees in the sequences. In this way, we classify the input to the closest target.\

Many agent - systems are an abstract rendition of the idea of networking which does encompass problem solving  in machine learning cf. \cite{Diet}. The networking idea is based on fusion operations which made ways into agents signatures.

\begin{definition} [Granular agent]\label{grana} The signature of a granular agent $a$ is $<U_a, rsubst_a, L_a, fus_{rsubst_{a}}, fus_{l_{a}}, fus_{e_{a}}>,$ where \end{definition}
\begin{enumerate}
\item\ $U_a$ is the universe of entities manipulated by $a$.
\item\  $rsubst_a$ is the   rough subset relation used by $a$.
\item\ $L_a$ is the logical language used by $a$.
\item\  $fus_{rsubst_{a}}$ is the fusion operator showing how $a$ relates $rsubst_a$ to rough subsets applied by  agents connected to $a$.
\item\ $fus_{l_{a}}$ is the logical synthesis operator showing how logical formulae at agents connected to $a$ are transformed into a formula in $L_a$.
\item\ $fus_{e_{a}}$ is the fusion operator showing how entities at agents connected to $a$ are forming the compound entity  in $U_a$.
\end{enumerate}
We  consider as an illustration the  elementary case  when the agent $a$ receives information about rough subsets $rsubst_b, rsubst_c$ entities $x_b, x_c$, and,  formulas $\phi_b, \phi_c$ submitted  by agents $b,c$ connected to $a$ as leaves of the tree with the root  $a$. A generalization is straightforward.\

We assume that  operator $fus_{e_{a}}$ acts on  pairs of objects: $x_b$ sent by the agent $b$ and $x_c$ sent by the agent $c$ in the coordinate - wise way, i.e., $fus_{e,{a}}(x_b,x_c)=x_bx_c$.\ We may assume that $x_b, x_c$ are binary codes for parts sent to $a$ from $b, c$ and then $x_bx_c$ is the concatenation of codes. Some quite obvious and securing regularity in networks conditions may be listed.

\begin{enumerate}
\item[(RC1)]\ $Cmp_b(x_b, x'_b), Cmp_c(x_c, x'_c)\supset Cmp_a(fus_{e_{a}}(x_bx_c),fus_{e_{a}}(x'_bx'_c))$.
\item[(RC2)]\ The symbol of double turnstile in $x\models \phi$ means that  $x$ satisfies $\phi$.  Then: if $x_b\models \phi_b$ and $x_c \models \phi_c$, then $fus_{e_{a}}(x_b, x_c)\models fus_{l_{a}}(\phi_b, \phi_c)$.
\item[(RC3)]\ If $rsubst_b(x_b, y_b, r_b)$ and $rsubst_c(x_c, y_c, r_c)$, then $$rsubst_{a}(fus_{e_{a}}(x_b, x_c), fus_{e_{a}}(y_b, y_c), fus_{rsubst_{a}}(r_b, r_c))$$
\item[(RC4)]\ We define indirectly the operator $fus_{g_{a}}$ on granules to be applied in (RC5):

 If $Cmp_b(x_b, g_{rsubst_{b}}(y_b, r_b))$ and $Cmp_c(x_c, g_{rsubst_{c}}(y_c, r_c))$, then $$Cmp_a(fus_{e_{a}}(x_b, x_c),[ fus_{g_{a}}(g_{rsubst_{b}}(y_b, r_b),  g_{rsubst_{c}}(y_c, r_c))$$ $$(fus_{e_{a}}(y_b, y_c), fus_{rsubs_a}(r_b, r_c))]$$
\item[(RC5)]\ If $\Im_b^{\vee}(\phi_b)(g_b)\geq r_b$ and $\Im_c^{\vee}(\phi_c)(g_c)\geq r_c$, then

$\Im_a^{\vee}(fus_{l_{a}}(\phi_b, \phi_c)(fus_{g_{a}}(g_b,g_c)\geq fus_{rsubst_{a}}(r_b,r_c).$
\end{enumerate}
We return to our case of the tree rooted at $a$ with leaves $b, c$.

\begin{theorem} [Rules of propagation]\label{propagation} Operators $fus_{rsubst_{a}}$ and $fus_{g_{a}}$ are ruled by the \L ukasiewicz t-norm  $L$ whereas extensions  $\Im^{\vee}$ are ruled by the product t-norm.\end{theorem}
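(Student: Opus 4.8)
The plan is to deduce all three assertions from the \emph{coordinate-wise} assembly imposed in Def.~\ref{coord}(1) together with the explicit \L ukasiewicz representation of rough subsets supplied by Thm.~\ref{arch}. I would work throughout with the tree rooted at $a$ with leaves $b,c$, the concatenation fusion $fus_{e_{a}}(x_{b},x_{c})=x_{b}x_{c}$ of (RC1), and $F_{a}=F_{b}\cup F_{c}$ with $F_{b}\cap F_{c}=\emptyset$. First I would record the one combinatorial identity that drives everything: since the two blocks of coordinates are disjoint and are compared independently, $Dis_{a}(x_{b}x_{c},y_{b}y_{c})=Dis_{b}(x_{b},y_{b})\cup Dis_{c}(x_{c},y_{c})$ as a disjoint union, hence also $Ind_{a}(x_{b}x_{c},y_{b}y_{c})=Ind_{b}(x_{b},y_{b})\cup Ind_{c}(x_{c},y_{c})$, so $|Ind_{a}|=|Ind_{b}|+|Ind_{c}|$ and $|F_{a}|=|F_{b}|+|F_{c}|$.

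Next I would handle $fus_{rsubst_{a}}$. By Thm.~\ref{arch} the \L ukasiewicz-induced rough subset in each of the three information systems is $rsubst^{I}_{L}(x,y,r)\equiv |Ind(x,y)|/|F|\geq r$. Assuming $rsubst_{b}(x_{b},y_{b},r_{b})$ and $rsubst_{c}(x_{c},y_{c},r_{c})$ gives $|Ind_{b}|\geq r_{b}|F_{b}|$ and $|Ind_{c}|\geq r_{c}|F_{c}|$, and the identity above then yields $|Ind_{a}|/|F_{a}|\geq (r_{b}|F_{b}|+r_{c}|F_{c}|)/(|F_{b}|+|F_{c}|)$, a weighted mean of $r_{b},r_{c}$, hence at least $\min\{r_{b},r_{c}\}$ and a fortiori at least $L(r_{b},r_{c})=\max\{0,r_{b}+r_{c}-1\}$, since every t-norm is dominated by $\min$. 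Thus (RC3) is satisfied by $fus_{rsubst_{a}}=L$; I would then argue that $L$ is moreover the \emph{natural} choice, because it is exactly the t-norm under which the rough subsets of Thm.~\ref{arch} already compose (Thm.~\ref{props}), so placing $L$ at every fusion node keeps the whole network governed by one t-norm. For $fus_{g_{a}}$ I would invoke Thm.~\ref{gran1} (the $rsubst^{I}_{L}$ are symmetric and transitive, so granules are ordinary sets) together with Thm.~\ref{granprop}(5), which for $T=L$ is precisely the \L ukasiewicz composition of granules; combined with the estimate just made this pins $fus_{g_{a}}$ in (RC4) to the \L ukasiewicz rule.

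For the extensions I would compute directly. With $\nu=\nu_{L}$ of Def.~\ref{rint} and $h(x)=1-x$, the extension is $\Im^{\vee}(g)(\phi)=1-|g\setminus[\phi]|/|g|=|g\cap[\phi]|/|g|$. Coordinate-wise assembly forces the fused granule to be a Cartesian product, $fus_{g_{a}}(g_{b},g_{c})=g_{b}\times g_{c}$, and (RC2) forces $[fus_{l_{a}}(\phi_{b},\phi_{c})]=[\phi_{b}]\times[\phi_{c}]$, since $x_{b}x_{c}\models fus_{l_{a}}(\phi_{b},\phi_{c})$ iff $x_{b}\models\phi_{b}$ and $x_{c}\models\phi_{c}$. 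Hence $|g_{a}\cap[\phi_{a}]|=|g_{b}\cap[\phi_{b}]|\cdot|g_{c}\cap[\phi_{c}]|$ and $|g_{a}|=|g_{b}|\cdot|g_{c}|$, and division gives the exact law $\Im^{\vee}_{a}(fus_{l_{a}}(\phi_{b},\phi_{c}))(fus_{g_{a}}(g_{b},g_{c}))=\Im^{\vee}_{b}(\phi_{b})(g_{b})\cdot\Im^{\vee}_{c}(\phi_{c})(g_{c})$: the extensions propagate by the product t-norm. This is compatible with (RC5), since $(1-r_{b})(1-r_{c})\geq 0$ gives $r_{b}r_{c}\geq L(r_{b},r_{c})$, so the product value always dominates the conservative \L ukasiewicz bound carried through $fus_{rsubst_{a}}$ — which is exactly the contrast the theorem records.

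The hard part will be the third step: the data (RC1), (RC2), (RC4) only assert that the fusion operators on granules and formulae exist and are compatible with membership and satisfaction, so one must argue that the disjointness of the feature sets in Def.~\ref{coord}(1) leaves \emph{no} freedom and forces the Cartesian-product form for both $fus_{g_{a}}(g_{b},g_{c})$ and $[fus_{l_{a}}(\phi_{b},\phi_{c})]$; that rigidity is what upgrades the inequalities of the rough-subset layer to the exact multiplicative identity for $\Im^{\vee}$. A secondary nuisance is the boundary behaviour — the truncation in $L$ and the empty-intersection cases must be checked so that the \L ukasiewicz bound is never exceeded — and the weighted-mean estimate must be run for feature sets of arbitrary, possibly unequal, cardinalities rather than only the equal-size case.
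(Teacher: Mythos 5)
Your treatment of the extensions is essentially the paper's: item (9) of the paper's proof sketch asserts exactly the multiplicative law $\Im_a^{\vee}(fus_{l_a}(\phi_b,\phi_c))(fus_{g_a}(g_b,g_c))=\Im_b^{\vee}(\phi_b)(g_b)\cdot\Im_c^{\vee}(\phi_c)(g_c)$, and your derivation via $g_a=g_b\times g_c$, $[\phi_b\wedge\phi_c]=[\phi_b]\times[\phi_c]$ and counting is a legitimate way to obtain it. The problem is in the first half, and it is a genuine gap rather than a stylistic difference. You model the fused feature set as the disjoint union $F_a=F_b\cup F_c$, which gives $|F_a|=|F_b|+|F_c|$ and $|Ind_a|=|Ind_b|+|Ind_c|$, so the sharp propagation rule you actually derive is the weighted arithmetic mean $(r_b|F_b|+r_c|F_c|)/(|F_b|+|F_c|)$, which dominates $\min\{r_b,r_c\}$ and hence dominates \emph{every} t-norm. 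Your step ``a fortiori at least $L(r_b,r_c)$'' is therefore true but vacuous as an identification of the \L ukasiewicz t-norm as the operator that \emph{rules} the fusion: on your reading the theorem would be equally ``proved'' with the minimum, the product, or any other t-norm in place of $L$, and the appeal to naturalness via Thm.~\ref{props} does not repair this.

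The paper's proof instead takes $F_a=F_b\times F_c$ (items 1, 4 and 6 of its sketch): a fused feature is a pair, $DIS_a(x_bx_c,y_by_c)\subseteq DIS_b(x_b,y_b)\times F_c\,\cup\,F_b\times DIS_c(x_c,y_c)$, and the normalization is by $|F_b|\cdot|F_c|$. The union bound then gives $1-\frac{|DIS_a|}{|F_b||F_c|}\geq 1-\frac{|DIS_b|}{|F_b|}-\frac{|DIS_c|}{|F_c|}=r_b+r_c-1$, i.e.\ exactly $L(r_b,r_c)$ and nothing stronger in general --- that tightness is the actual content of ``$fus_{rsubst_a}$ is ruled by $L$,'' and it is the ingredient your argument is missing. (In fairness, Def.~\ref{coord}(1) in the paper literally says ``union of feature sets,'' so your reading is defensible from the definition alone; but the theorem as stated is only recovered under the product decomposition used in the paper's own proof, and a complete argument must either adopt that decomposition or explain why the union model still singles out $L$.) Once the \L ukasiewicz rule is in place for $fus_{rsubst_a}$, the granule clause (item 8 of the paper, your appeal to Thm.~\ref{granprop}(5)) follows as you say.
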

\begin{proof}\ We indicate the direction of proof. We consider our tree with the root $a$ and leaves $b,c$. By our assumptions:
\begin{enumerate}
\item\ $F_a=F_{b}\times F_{c}$.
\item\ $fus_{e_{a}}(x_b,x_c)=x_bx_c$.\
\item\ $IND_a(fus_{e_{a}}(x_b,x_c), fus_{e_{a}}(y_b,y_c))= IND_b(x_b,y_b)\cup IND_c(x_c,y_c)$.
\item\ $DIS_a(fus_{e_{a}}(x_b,x_c), fus_{e_{a}}(y_b,y_c))\subseteq DIS_b(x_b,y_b)\times F_c \cup F_b\times DIS_c(x_c,y_c)$.
\item\ $ rsubst_a(x,y)= 1 - \frac{|DIS_a(x,y)|}{|F_a|}$.
\item\ $r=max\{s: (fus_{e_{a}}(x_b,x_c), fus_{e_{a}}(y_b, y_c), s)$ =

$1- $ $\frac{|DIS_a(fus_{e_{a}}(x_b,x_c), fus_{e_{a}}(y_b,y_c))}{|F_b|\cdot |F_c|}$.
\item\ $r\geq L(max \{s: m_b(x_b,y_b,s)\}, max\{t: m_c(x_c,y_c,t)\})$.
\item\ Granules propagate along the line:

$fus_{g_{a}}(g_b(x_b, r_b), g_c(x_c, r_c)) =  g_a(fus_{e_{a}}(x_b, x_c), L(r_b, r_c))$.
\item\ For logical formulas: $fus_{l_{a}}(\phi_b,\phi_c)=\phi_b\wedge\phi_c$. Extensions propagate thus along the lines
$\Im_a^{\vee}(fus_{l_{a}}(\phi_b,\phi_c)(fus_{g,a}(g_b,g_c)= \Im_b^{\vee}(\phi_b)(g_b)\cdot \Im_c^{\vee}(\phi_c)(g_b)$.
\end{enumerate}
\end{proof}

Our last topic is about spatial reasoning and its applications to mobile robotics.

\section{Spatial reasoning and  robotics}\label{spares}

In this part, we define a mereogeometry and apply it towards strategies for navigation for robots and their formations. First, we revisit the notion of betweenness relation going back to  \cite{Tar2} and an extension from the plane to Euclidean spaces in cite{vB}.
\begin{definition}[Distance function $\rho$]\label{rho} Given a rough subset $rsubst$, we define the function $\rho(x, y)=rs^*(x,y)+rs^*(y,x)$.\end{definition}
$\rho$ is a pseudo-metric, lacking possibly the triangle condition.\ Observe the inversion: $\rho(x,y)=1 equiv x=y$.

We model obstacles in robot' space as well as safety regions about robots as closed rectangles with sides parallel to coordinate axes in the Euclidean plane.

We define the relation of rough nearness, $rnear$ which is a paraphrase of  the nearness relation in \cite{vB}.
\begin{definition}[r-nearness]\label{near} We say that the instance $rnear(x,y,z)$ holds true if and only if $\rho(x,y)\geq \rho(x,z)$. \end{definition}
We define the relation $rbtw$ (r-betweenness)\cite{Tar2}, \cite{vB}.
\begin{definition} [r-betweenness]\label{betwee} We say that entity $z$ is between  $x$ and $y$, if and only if for each $w$ either $w=z$ or  $rnear(z,x,w)$ or $rnear(z,y,w)$. \end{definition}
\begin{definition}[Extent]\label{ext}  For disjoint rectangles $x,y$, the extent $ext(x,y)$ is the smallest rectangle containing $x$ and $y$. See \cite{Polk4}.\end{definition}
A characterization of betweenness is as follows.
\begin{theorem}[Only sum is between]\label{only} For each pair $x,y$ of things, the only thing between $x$ and $y$ is the sum $x+y$.\end{theorem}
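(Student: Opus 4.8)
\medskip
\noindent\emph{Proof proposal.} The plan is to prove the two inclusions separately: that $x+y$ satisfies the betweenness clause of Definition~\ref{betwee} for $x$ and $y$, and that any $z$ satisfying it must equal $x+y$. The whole argument rests on the Tarski algebra of Definition~\ref{tarski} together with the weight axioms. The basic identities are these: since $subst(x,x+y)$ and $subst(y,x+y)$ hold by construction of the sum, Theorem~\ref{mimp} gives $x\cdot(x+y)=x$ and $y\cdot(x+y)=y$, whence $rs_w^*(x,x+y)=\frac{w(x)}{w(x)}=1$, $rs_w^*(y,x+y)=1$, $rs_w^*(x+y,x)=\frac{w(x)}{w(x+y)}$ and $rs_w^*(x+y,y)=\frac{w(y)}{w(x+y)}$. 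With Definition~\ref{rho} this produces the two quantities that control the question, $\rho(x,x+y)=1+\frac{w(x)}{w(x+y)}$ and $\rho(y,x+y)=1+\frac{w(y)}{w(x+y)}$.

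For the inclusion ``$x+y$ is between $x$ and $y$'' I would unfold Definition~\ref{betwee}: for every test entity $w\neq x+y$ one must verify $rnear(x+y,x,w)$ or $rnear(x+y,y,w)$, i.e.\ $\rho(x+y,w)\le\max\{\rho(x+y,x),\rho(x+y,y)\}$. Writing $c=(x+y)\cdot w=x\cdot w+y\cdot w$ by distributivity, one has $\rho(x+y,w)=\frac{w(c)}{w(x+y)}+\frac{w(c)}{w(w)}$, and the relevant tools are items~3, 4 and~6 of Theorem~\ref{stat} ($subst$-monotonicity of $w$, $w(x+y)=w(x)+w((-x)\cdot y)$, and $w(x)+w(-x)=1$), which bound $w(c)$ in terms of the weights already computed and yield the inequality. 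For the converse inclusion, suppose $z$ lies between $x$ and $y$; testing Definition~\ref{betwee} with $w=x$ and $w=y$ drives $rs_w^*(x,z)$ and $rs_w^*(y,z)$ to their maximal value, hence $subst(x,z)$ and $subst(y,z)$ by Theorem~\ref{mimp}, so $z$ is an upper bound of $\{x,y\}$ in $\mathcal{B}$, and since $x+y$ is the \emph{least} such bound (Definition~\ref{tarski} with the class postulates (C1)--(C2)) we get $subst(x+y,z)$. A further test entity, a proper part of $z$ of weight just below $w(z)$, then contradicts the betweenness of $z$ unless $z$ has no proper part of weight above $w(x+y)$, i.e.\ unless $z=x+y$.

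The step I expect to be the real obstacle is the handling of the universal quantifier over \emph{all} $w$ in Definition~\ref{betwee}, in both directions --- concretely, ruling out a $w$ that is a large proper part of $x+y$ (which is ``very similar'' to the sum), and, in the uniqueness part, knowing that $z$ has proper parts of weight arbitrarily close to $w(z)$. In a general weight model this forces one either to add a density hypothesis on $(\Omega, part, w)$ or to read ``$+$'' in the statement as the extent $ext(x,y)$ of Definition~\ref{ext}: in the robotics model the entities are closed axis-parallel rectangles and $ext(x,y)$ is the \emph{smallest} one containing $x$ and $y$, so no strictly smaller enclosing rectangle exists and the troublesome test entities are simply absent, discharging the quantifier at once. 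The algebraic bookkeeping via Theorems~\ref{mimp} and~\ref{stat} is routine; it is this last, model-dependent point that the proof must treat with care.
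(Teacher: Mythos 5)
Your route is genuinely different from the paper's. The paper's proof is a three\-/line, purely mereological argument that addresses only uniqueness: assuming $w\neq x+y$, it extracts from the class definition of $+$ a part $u$ of $w$ with $ext(u,x)$ or $ext(u,y)$ and asserts (without computing any $\rho$ values) that this contradicts betweenness of $w$; it never verifies that $x+y$ itself is between $x$ and $y$. You instead work quantitatively through $\rho$ and the weight axioms and attempt both inclusions. Your diagnosis of the existence half is correct and goes beyond the paper: for a proper part $v$ of $x+y$ one has $\rho(x+y,v)=\frac{w(v)}{w(x+y)}+1$, which exceeds both $\rho(x+y,x)=\frac{w(x)}{w(x+y)}+1$ and $\rho(x+y,y)=\frac{w(y)}{w(x+y)}+1$ as soon as $w(v)>\max\{w(x),w(y)\}$; such a $v$ exists in ordinary weight models with disjoint $x,y$, so $x+y$ then fails to be between $x$ and $y$. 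The theorem really does need the model\-/specific reading (extents of axis\-/parallel rectangles) that the paper only supplies in the Remark following the theorem, and the paper's proof silently omits this half altogether.

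There is, however, a genuine gap in your uniqueness argument at its first step. Testing Definition~\ref{betwee} with the entity $w=x$ produces the disjunction ``$rnear(z,x,x)$ or $rnear(z,y,x)$'', and the first disjunct reads $\rho(z,x)\geq\rho(z,x)$, which holds trivially; the test with $w=y$ is vacuous for the same reason. So these tests impose no constraint at all and cannot ``drive $rs_w^*(x,z)$ and $rs_w^*(y,z)$ to their maximal value''; you never obtain $subst(x,z)$ and $subst(y,z)$, and the least\-/upper\-/bound step that follows has nothing to stand on. To defeat a candidate $z\neq x+y$ you must exhibit a test entity strictly closer to $z$ than both $x$ and $y$ are, i.e.\ one with $\rho(z,w)>\max\{\rho(z,x),\rho(z,y)\}$; the natural choices are $z\cdot(x+y)$, or a suitable whole built on the part $u$ of $z$ exterior to $x$ or to $y$ that the paper's own proof invokes, with the inequality checked via items 3, 4 and 7 of Theorem~\ref{stat}. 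Even then the density caveat you raise for the existence half reappears, so your overall conclusion that the result is model\-/dependent stands; but as written the converse inclusion is not proved.
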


\begin{proof} Assume that $w\neq x+y$, hence, by the definition of $+$, there exists a thing $u$ such that $part(u,w)$ and either $ext(u,x)$ or $ext(u,y)$. For $t$ either $x$ or $y$, depending on the latter case, we obtain contradiction with the notion of betweenness.\end{proof}
\begin{remark}\label{rem} In case of closed rectangles  with sides parallel  to coordinate axes $x,y$, the mereological sum $x+y$ is the extent $ext(x, y)$, i.e, the smallest rectangle containing both $x, y$. In this case,  $z$ is between  $x, y$ if and only if  $Cmp(z, ext(x,y))$ holds cf. \cite{Polk4}.\end{remark}
On the basis of Remark ref{rem}, we may extend the notion of r-betweenness to all proper rectangles which are subsets of $extent(x, y)$ disjoint to $x, y$. In this way, we open the mereogeometry environment to mobile robot navigation.
The principal component of our approach is the mereological potential field, the counterpart of  potential fields in robotics see  \cite{Khab}, \cite{Osm}.
 The crucial notion  in navigation problem for formations of robots is the r-betweenness relation of Def. \ref{betwee}.
\begin{definition}[Robot formations]\label{r-formation}\  An r-formation is a pair $(F, B)$,  where $F$ is a finite set of robots and $B$ is an r-betweenness relation on $F$.\end{definition}
\begin{example}\label{nest}\ We give two lines of the  description of a cross formation of five robots, see \cite{Osm} (Roomba is the trademark of IRobot Inc).\

 (cross
   (set
     (max-dist 0.25 roomba 0 (between roomba 0 roomba 1 roomba 2))\

      ........................

       (not-between roomba 4 roomba 1 roomba 2)\
      )
)
\end{example}
To keep the formation within limits a parameter $max-dist$ is applied which sets the maximal distance among robots. In Fig.1, we show the result of navigating a cross formation of robots \cite{Osm}.

 \begin{figure}[!h]
  \vspace*{-3mm}
  \centering\includegraphics[width=9.1cm]{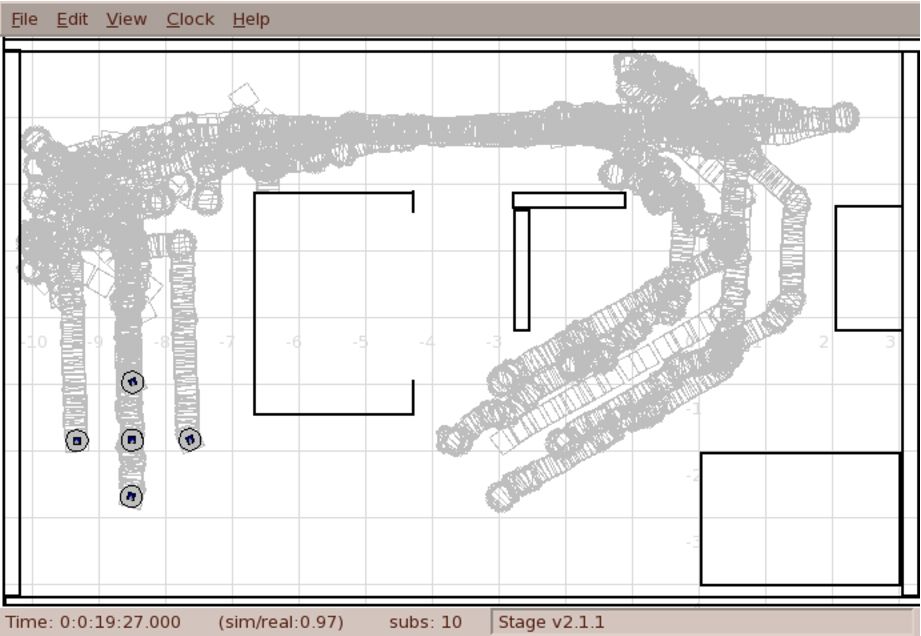}\vspace*{-2mm}
  \caption{The cross formation navigating obstacles. Courtesy PJIIT Publishers}
  \end{figure}

  {\bf A discussion and conclusions}\

  The aim of this work has been to show in a single  exposition the results of the extension to rough mereology of the theory of mereology This theory treats concepts in a holistic way allowing for relations among concepts in terms of parts, hence, it provides a theoretical foundation for rough set theory which  is based on holistic containment of exact vs. inexact sets.  We describe the extension of mereology to rough mereology, which submits to  mereology the notion of uncertain measurable containment.  We show how to use rough mereology for introduction of intensional logics into rough set theory decision systems, for definitions of  abstract versions of fuzzy reasoning, for introduction of mechanisms for granulation in granular decision systems, for application of granular deciders to classification tasks, for reasoning in multi-agent systems and in cognitive networks, and finally, in problems of intelligent robotics by creating by means of mereogeometry  mereological potential fields and navigate with formations of robots.
  Our approach gives flexibility o the problems of robot navigation and provides better classification and decision tools. We do hope that ideas presented in this article will gain popularity and will bring new ideas and tools.\

 {\bf  Declaration of Competing Interest}\

The author declares that there is  no known competing financial interests or personal relationships that could have appeared to influence the work reported in this paper.

\end{document}